\newcommand{\eps}{\varepsilon}
\newcommand{\pluseq}{\mathrel{+}=}
\newtheorem{definition}{Definition}
\newtheorem{theorem}{Theorem}
\icmltitlerunning{Differentially Private EBMs}
\begin{document}

\twocolumn[
\icmltitle{Accuracy, Interpretability, and Differential Privacy via Explainable Boosting}

\icmlsetsymbol{equal}

\begin{icmlauthorlist}
\icmlauthor{Harsha Nori}{msft}
\icmlauthor{Rich Caruana}{msft}
\icmlauthor{Zhiqi Bu}{penn}
\icmlauthor{Judy Hanwen Shen}{stanf}
\icmlauthor{Janardhan Kulkarni}{msft}
\end{icmlauthorlist}

\icmlaffiliation{msft}{Microsoft, Redmond, USA.}
\icmlaffiliation{penn}{University of Pennsylvania, Philadelphia, USA.}
\icmlaffiliation{stanf}{Stanford University, Palo Alto, USA}

\icmlcorrespondingauthor{Harsha Nori}{hanori@microsoft.com}

\icmlkeywords{Machine Learning, ICML}

\vskip 0.3in
]

\printAffiliationsAndNotice{}  %

\addtolength{\textfloatsep}{-0.2in}

\begin{abstract}

We show that adding differential privacy to Explainable Boosting Machines (EBMs), a recent method for training interpretable ML models, yields state-of-the-art accuracy while protecting privacy. Our experiments on multiple classification and regression datasets show that DP-EBM models suffer surprisingly little accuracy loss even with strong differential privacy guarantees. In addition to high accuracy, two other benefits of applying DP to EBMs are: a) trained models provide exact global and local interpretability, which is often important in settings where differential privacy is needed; and b) the models can be edited after training without loss of privacy to correct errors which DP noise may have introduced.

\end{abstract}

\section{Introduction}
\label{introduction}
Security researchers have repeatedly shown that machine learning models can leak information about training data \cite{carlini2018secret, melis2019exploiting}. In industries like healthcare, finance and criminal justice, models are trained on sensitive information, and this form of leakage can be especially disastrous. To combat this, researchers have embraced differential privacy, which establishes a strong mathematical standard for privacy guarantees on algorithms \cite{dwork2006calibrating, dwork2014algorithmic}. In many of these high-stakes situations, model interpretability is also important to provide audits, help domain experts such as doctors vet the models, and to correct unwanted errors before deployment \cite{caruana2015intelligible, rudin2019stop}.
In this paper, we address both concerns by developing a private algorithm for learning Generalized Additive Models (GAMs) \cite{hastie1990generalized}. We show that this method can provide strong privacy guarantees, high accuracy, and exact global and local interpretability on tabular datasets.

While GAMs were traditionally fit using smooth low-order splines \cite{hastie1990generalized}, we focus on Explainable Boosting Machines (EBMs), a modern implementation that learns shape functions using boosted decision trees \cite{lou2012intelligible, nori2019interpretml}. EBMs are especially interesting because they often match the accuracy of complex blackbox algorithms like XGBoost and random forests, 
while having a simple optimization procedure and final structure \cite{chang2020interpretable, wang2020pursuit}. 

Our main contributions for this paper are:
\begin{itemize}
    \item We introduce DP-EBMs, a differentially private version of EBMs, and provide a rigorous privacy analysis of this algorithm using the recently introduced GDP framework \cite{dong2019gaussian}. 
    \item Our experimental results on tabular classification and regression problems show that DP-EBMs significantly outperform other DP learning methods. For example, at $\eps=0.5$, DP-EBMs have at most a 0.05 loss in AUC compared to non-private EBMs on benchmark datasets. 
    \item We demonstrate how combining interpretability with differential privacy can address common concerns with DP in practice by enabling users to repair some of the impact of noise on the model and enforce desirable constraints like monotonicity. 
\end{itemize}

Before diving into details in the following sections, Figure \ref{fig:graph-results} provides a quick peek at the empirical results. In summary, DP-EBMs outperform other differentially private learning methods on a variety of classification and regression tasks for many reasonable values of $\varepsilon$.

\begin{figure*}[ht]
\centering
\includegraphics[width=\textwidth]{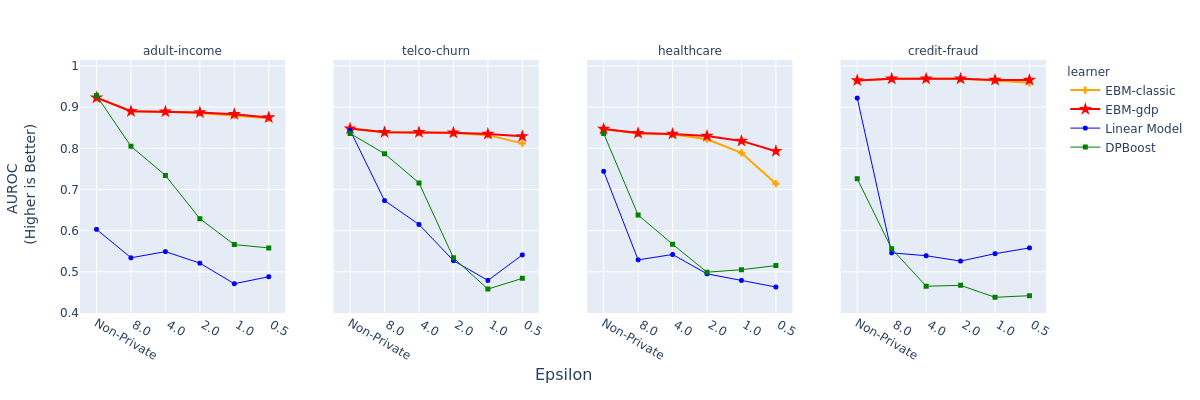}
\label{fig:graph-results}
\vspace{-0.1in}
\caption{Comparison of two variants of DP-EBMs with other DP algorithms on four classification datasets. DP-EBMs significantly outperform other methods in every setting. For the full experimental setup and comparisons on regression models, please see Section 4.}

\end{figure*}

\section{Preliminaries}
\label{preliminaries}

\subsection{Explainable Boosting Machines}
Explainable Boosting Machines belong to the family of Generalized Additive Models (GAMs), which are restricted machine learning models that have the form: \,

\centerline{$g(E[y]) = \beta + f_{0}(x_{0}) + f_{1}(x_{1}) + ... f_{k}(x_{k})$}

where $\beta$ is an intercept, each $f_{j}$ is a univariate function that operates on a single input feature $x_{j}$, and $g$ is a link function that adapts the model to different settings like classification and regression \cite{hastie1990generalized}. 

While GAMs are more flexible than linear models (where each function $f_{j}$ is further restricted to be linear), they are significantly less flexible than most machine learning models due to their inability to learn high-order interactions between features (e.g. $f(x_0, x_1, x_2)$). This restricted additive structure has the benefit of allowing GAMs to provide exact interpretability. At prediction time, each feature contributes a score, which are then summed and passed through a link function. These scores show exactly what each feature contributes to the prediction, and can be sorted, compared, and reasoned about \cite{lundberg2017unified}. In addition, each function $f_k$ can be visualized to provide an exact global description of how the model operates across varying inputs.

EBMs are a recent, popular open-source implementation of boosted-tree GAMs \cite{nori2019interpretml}. We extend the EBM package to include DP-EBMs\footnote{\url{https://github.com/interpretml/interpret}}, which makes DP-EBMs as easy to use as regular EBMs or any scikit-learn model. 

The EBM training procedure begins by bucketing data from continuous features into discrete bins, ensuring that each bin has approximately equal amounts of data. This pre-processing step is a common optimization in tree-based learning algorithms, and is used by popular packages like LightGBM and XGBoost \cite{ke2017lightgbm, chen2016xgboost}. The most time consuming part of training a decision tree is finding the best split; discretizing the data before growing trees reduces the search space for splits which can significantly speed up learning with little cost in accuracy.

After pre-processing, the goal is to learn shape functions $f_k$ for each feature.
In traditional boosting, each tree greedily searches the feature space for the next best feature to split. In contrast, EBMs use \emph{cyclic} gradient boosting to visit each feature in round-robin fashion.
To enforce additivity, each tree is only allowed to use one feature, thus preventing interactions from being learned \cite{lou2012intelligible}.

Cyclic boosting begins by growing a shallow decision tree on the first feature in the dataset. 
The predictions the tree makes on each bin of the histogram are then multiplied by a low learning rate, and these weak predictions for each bin of data become the initial shape function for the first feature. 
The process then iterates to the second feature, where a tree is trained to predict on the residuals (the remaining error) of the first feature's model. Once a shallow tree has been learned for every feature, the boosting process cycles back to the first feature and continues in a round robin fashion for all $E$ epochs to jointly optimize all functions. 
The pseudocode for this algorithm can be found in Algorithm \ref{alg:ebm}.

\begin{figure}
    \centering
    \includegraphics[width=7.55cm]{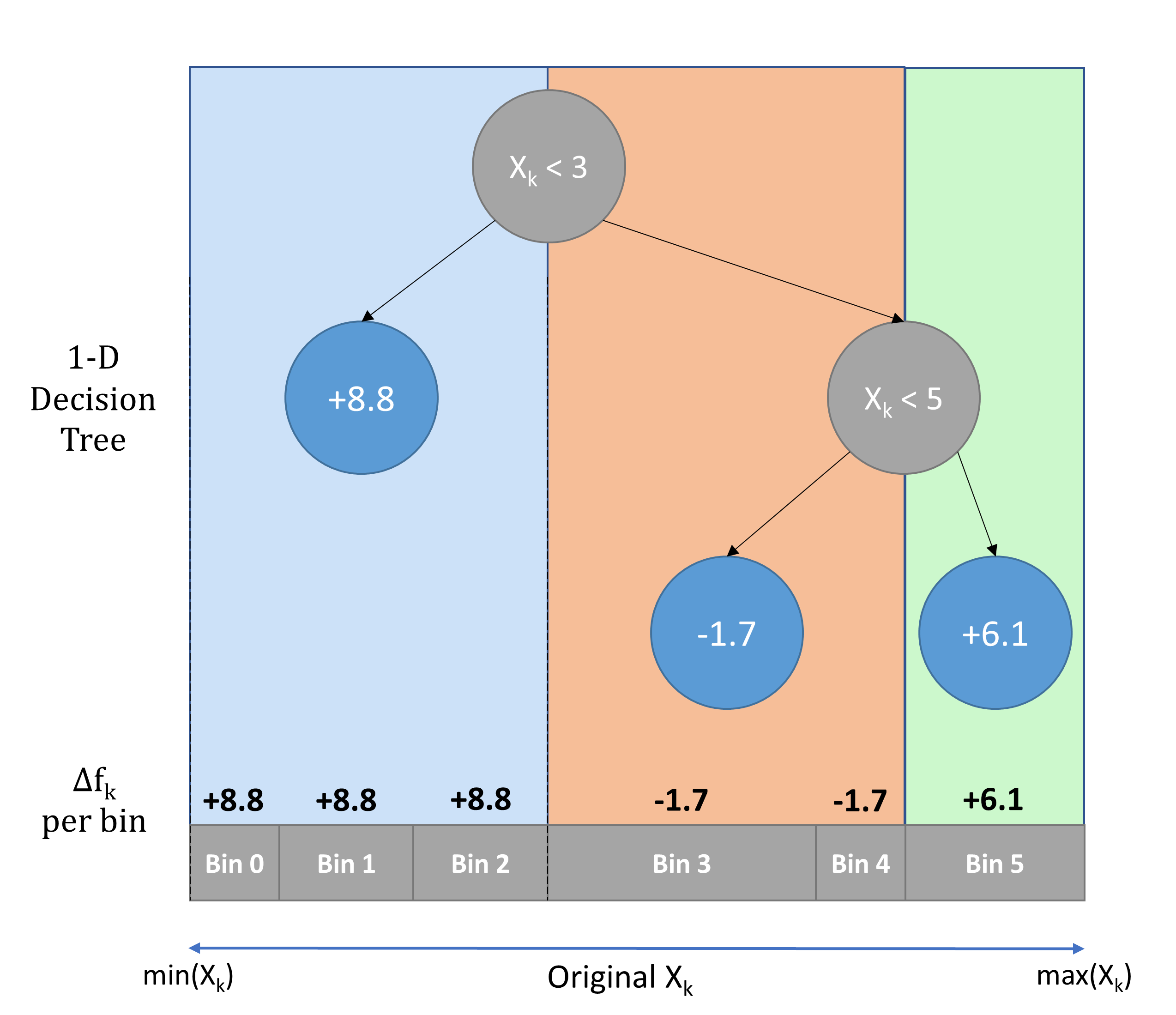}
    \caption{A single iteration of cyclic boosting, showing how each tree operates on pre-processed data and gets collapsed into the univariate shape function $f_k$.}
    \label{fig:ebm-overview}
\end{figure}

\begin{algorithm}[H]
   \caption{Explainable Boosting}
   \label{alg:ebm}
\begin{algorithmic}[1]
   \STATE {\bfseries Input:} data $X$, labels $y$, epochs $E$, 
   
   learning rate $\eta$, max splits $m$
   \STATE {\bfseries Output:} 1d functions $f_k$ per feature
   \STATE
   \STATE $t = 0$
   \STATE Initialize residuals: $r^t_i = y_i$
   \FOR{feature $0...K$} 
    \STATE Bin data: $H_k = Bin(X[:, k])$
    \STATE Initialize output function: $f^t_k = [0,...,0]$
   \ENDFOR
   \STATE
   
\FOR{epoch $1...,E$}
   \FOR{feature $0,...,K$}
        \STATE $t \pluseq 1$
        \STATE Select best splits $S_0,...,S_m$
        \FOR{split $\ell \in  \{0,...,m\}$}
            \STATE Sum residuals: $T = \eta \cdot \sum_{b \in S_{\ell}} \sum_{x_i \in H_k (b)} {r^t_i}$
            \STATE \textcolor{white}{.}...
            \vspace{0.09in}
            \STATE Calculate average: $\mu = \frac{T}{\sum_{b \in S_\ell} H_k(b)}$
              \FOR{each histogram bin { $b \in S_\ell$}}
                    \STATE  Update output function: $f^t_k(b) = f^t_k(b) + \mu$
                    \STATE \textcolor{white}{.}...
              \ENDFOR
        \ENDFOR
        \FOR{each data point $x_i$}
            \STATE Residuals: $r^{t+1}_i = y_i - \sum_{k}{f^t_k(\rho(H_k,x_i))}$
        \ENDFOR
    \ENDFOR
\ENDFOR
\end{algorithmic}
\end{algorithm}

\begin{algorithm}[t]
   \caption{Differentially Private Explainable Boosting}
   \label{alg:dp-ebm}
\begin{algorithmic}[1]
   \STATE {\bfseries Input:} data $X$, labels $y$, epochs $E$, learning rate $\eta$, max splits $m$, \textcolor{blue}{range of labels $R$, privacy parameters $\eps, \delta$}
   \STATE {\bfseries Output:} 1d functions $f_k$ per feature
   \STATE
   \STATE $t = 0$
   \STATE Initialize residuals: $r^t_i = y_i$
   \FOR{feature $0...K$} 
    \STATE \textcolor{blue}{Privately bin data}: $\hat{H}_k = DPBin(X[:, k], \eps_{bin})$
    \STATE Initialize output function: $f^t_k = [0,...,0]$
   \ENDFOR
   \STATE
   
\FOR{epoch $1...,E$}
   \FOR{feature $0,...,K$}
        \STATE $t \pluseq 1$
        \STATE \textcolor{blue}{Randomly select splits} $S_0,...,S_m$
        \FOR{split $\ell \in  \{0,...,m\}$}
       \STATE Sum residuals: $T = \eta \cdot \sum_{b \in S_{\ell}} \sum_{x_i \in \hat{H}_k (b)} {r^t_i}$
        \STATE \textcolor{blue}{Add noise}: $\hat{T} = T +  \sigma\cdot \eta R\cdot\mathcal{N}(0,1)$
        \STATE \textcolor{blue}{Calculate private average}: $\mu = \frac{\hat{T}}{\sum_{b \in S_\ell} \hat{H}_k(b)}$
              \FOR{each histogram bin { $b \in S_\ell$}}
                    \STATE  Update output function: $f^t_k(b) = f^t_k(b) + \mu$
                    \STATE  We release $f^t_k(b)$ values publicly.   
              \ENDFOR
        \ENDFOR
        \FOR{each data point $x_i$}
            \STATE Residuals: $r^{t+1}_i = y_i - \sum_{k}{f^t_k(\rho(\hat{H}_k,x_i))}$  %
        \ENDFOR
    \ENDFOR
\ENDFOR
\end{algorithmic}
\end{algorithm}

\subsection{Differential Privacy}

Here we state some basic results in Differential Privacy (DP) that we use in our analysis.
\begin{definition}[Differential Privacy]
	A randomized algorithm $\mathcal{A}$ is  ($\eps$,$\delta$)-differentially private if for all neighboring databases $D_1$, $D_2$ $\in D^n$, and for all sets $\mathcal{S}$ of possible outputs: 
\begin{equation}
\Pr[\mathcal{A}(D_1) \in \mathcal{S}] \leq e^{\eps}\Pr[\mathcal{A}(D_2) \in \mathcal{S}] +\delta
\end{equation}
\end{definition}

\begin{theorem}[Gaussian Mechanism \cite{dwork2014algorithmic}]
\label{lem:gaussian mechanism}
Given any function $f: D \rightarrow \mathbb{R}^k$, the Gaussian Mechanism is defined as: 
\begin{equation}
	\mathcal{M}(x, f(.), \eps, \delta) = f(x) + (Y_i, ..., Y_k)
\end{equation}
where $\Delta_2$ is the $\ell_2$-sensitivity and $Y_i$ are i.i.d. random variables drawn from $\mathcal{N}(0, \sigma^2)$. The Gaussian Mechanism is ($\eps, \delta$)-differentially private when $\sigma > \sqrt{2\ln1.25/\delta}\Delta_2/\eps$ and $\eps \in (0, 1)$. 
\end{theorem}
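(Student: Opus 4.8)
The plan is to reduce $(\eps,\delta)$-differential privacy to a one-dimensional tail bound on the \emph{privacy loss} and then estimate that tail. First I would invoke the standard reduction: for neighboring $D_1,D_2$ let $p_1,p_2$ be the densities of $\mathcal{A}(D_1),\mathcal{A}(D_2)$, and let $B=\{z:\ln(p_1(z)/p_2(z))>\eps\}$. For any output set $\mathcal{S}$, splitting the event $\{\mathcal{A}(D_1)\in\mathcal{S}\}$ over $B$ and $B^{c}$ gives
\[
  \Pr[\mathcal{A}(D_1)\in\mathcal{S}]\;\le\;\Pr[\mathcal{A}(D_1)\in B]\;+\;e^{\eps}\,\Pr[\mathcal{A}(D_2)\in\mathcal{S}],
\]
so it suffices to show $\Pr_{z\sim\mathcal{A}(D_1)}\!\big[\ln(p_1(z)/p_2(z))>\eps\big]\le\delta$ for every neighboring pair (the symmetric inequality follows by swapping $D_1$ and $D_2$).

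Next I would identify the privacy loss for the Gaussian mechanism. By translation invariance we may assume the output under $D_1$ is $x\sim\mathcal{N}(0,\sigma^{2}I_{k})$ and the output under $D_2$ is $x$ translated by $v:=f(D_1)-f(D_2)$ with $\|v\|_{2}\le\Delta_2$. A direct computation with the Gaussian densities gives
\[
  \ln\frac{p_1(x)}{p_2(x)}\;=\;\frac{1}{2\sigma^{2}}\big(2\langle x,v\rangle+\|v\|_{2}^{2}\big)\;=\;\frac{\langle x,v\rangle}{\sigma^{2}}+\frac{\|v\|_{2}^{2}}{2\sigma^{2}}.
\]
Since $x$ is isotropic, $\langle x,v\rangle\sim\mathcal{N}(0,\sigma^{2}\|v\|_{2}^{2})$, so the privacy loss is a univariate Gaussian with mean $\|v\|_{2}^{2}/(2\sigma^{2})$ and standard deviation $\|v\|_{2}/\sigma$; rewriting $\Pr[L>\eps]$ in standardized form shows its threshold is decreasing in $\|v\|_{2}$, hence the tail is largest in the worst case $\|v\|_{2}=\Delta_2$. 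The claim then reduces to
\[
  \Pr\!\Big[\mathcal{N}(0,1)>\tfrac{\sigma\eps}{\Delta_2}-\tfrac{\Delta_2}{2\sigma}\Big]\;\le\;\delta .
\]

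Finally I would carry out the tail estimate. Writing $\sigma=c\,\Delta_2/\eps$ with $c=\sqrt{2\ln(1.25/\delta)}$ makes the threshold $t=c-\eps/(2c)$, and I would apply the sub-Gaussian bound $\Pr[\mathcal{N}(0,1)>t]\le \frac{1}{t\sqrt{2\pi}}\,e^{-t^{2}/2}$. Expanding $t^{2}=c^{2}-\eps+\eps^{2}/(4c^{2})$ and dropping the nonnegative last term yields $e^{-t^{2}/2}\le e^{-c^{2}/2}e^{\eps/2}=\frac{\delta}{1.25}\,e^{\eps/2}$, so the tail is at most $\delta\cdot\frac{e^{\eps/2}}{1.25\,t\sqrt{2\pi}}$; it then remains to verify $\frac{e^{\eps/2}}{1.25\,t\sqrt{2\pi}}\le 1$, which is exactly where the hypothesis $\eps\in(0,1)$ enters --- it keeps $e^{\eps/2}$ near $1$ and $t$ bounded away from $0$ for every $\delta$ in the meaningful range. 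I expect this last bookkeeping with the constant $1.25$ to be the only real obstacle; everything preceding it is routine. An alternative route I would mention is that the privacy loss being exactly Gaussian means the mechanism is $\mu$-GDP with $\mu=\Delta_2/\sigma$, so the GDP-to-$(\eps,\delta)$ conversion of \cite{dong2019gaussian} recovers (in sharper form) the stated guarantee.
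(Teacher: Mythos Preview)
The paper does not prove this theorem; it is stated in the Preliminaries as a known result quoted from \cite{dwork2014algorithmic}, with no argument given. Your proposal is precisely the standard textbook proof from that reference: reduce to the privacy-loss tail, compute the log-likelihood ratio for shifted Gaussians, reduce to a one-dimensional standard-normal tail at threshold $\sigma\eps/\Delta_2-\Delta_2/(2\sigma)$, and bound it via the Mills ratio. Your identification of the constant bookkeeping (showing $e^{\eps/2}/(1.25\,t\sqrt{2\pi})\le 1$ for $\eps\in(0,1)$) as the only non-routine step is accurate; note this also implicitly requires $\delta$ to be bounded away from $1$ so that $c=\sqrt{2\ln(1.25/\delta)}$, and hence $t$, is not too small. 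Your closing remark that the same conclusion follows more sharply from the $\mu$-GDP characterization with $\mu=\Delta_2/\sigma$ is exactly how the paper \emph{uses} the Gaussian mechanism downstream (Theorems~\ref{thm:GDPdef}--\ref{thmGDPtoDP}), so that alternative is well aligned with the paper's framework.
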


One of the main strengths of DP are the composition theorems, which analyze the cumulative privacy guarantee when applying many differentially private mechanisms.
\smallskip
\begin{theorem}{(Theorem 4.3 from \cite{kairouz2017composition})}

For $\Delta>0$, $\eps > 0$ and $\delta \in [0, 1]$, the mechanism that adds Gaussian noise with variance:
    \begin{equation}
        8k\Delta^2 log(e + (\eps / \delta )) / \eps^2
    \end{equation}
satisfies $(\eps,\delta)$-differential privacy under $k$-fold adaptive composition.

\label{prop:k-comp}
\end{theorem}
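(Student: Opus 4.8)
The plan is to decompose the claim into three parts: (i) a tight privacy accounting of a single Gaussian step, (ii) a composition bound that aggregates $k$ such steps under adaptive composition, and (iii) an optimization of the per-step parameters so that the aggregate guarantee collapses to the stated closed form $\sigma^2 = 8k\Delta^2\ln(e+\eps/\delta)/\eps^2$. I would carry out (i)–(iii) in that order, and I would work with a divergence-based relaxation of DP rather than directly with the $(\eps,\delta)$ pair, because such relaxations compose exactly and make the bookkeeping in step (iii) tractable.

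For (i), fix neighboring databases $D_1,D_2$ and a query of $\ell_2$-sensitivity $\Delta$; the Gaussian mechanism outputs a sample from $\mathcal{N}(\mu_1,\sigma^2 I)$ or $\mathcal{N}(\mu_2,\sigma^2 I)$ with $\|\mu_1-\mu_2\|_2\le\Delta$. A direct computation of the Rényi divergence of two shifted spherical Gaussians gives $D_\alpha\big(\mathcal{N}(\mu_1,\sigma^2 I)\,\|\,\mathcal{N}(\mu_2,\sigma^2 I)\big)=\alpha\|\mu_1-\mu_2\|_2^2/(2\sigma^2)\le \alpha\Delta^2/(2\sigma^2)$ for every order $\alpha>1$; equivalently, each step is $\rho$-zCDP with $\rho=\Delta^2/(2\sigma^2)$. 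For (ii), I would use that this per-order Rényi bound (equivalently the zCDP parameter) is additive under $k$-fold adaptive composition, so the composed mechanism satisfies the same Rényi bound with $\rho$ replaced by $k\rho=k\Delta^2/(2\sigma^2)$. Then I would convert back to $(\eps,\delta)$-DP via the standard tail bound on the privacy-loss random variable: a $\rho'$-zCDP mechanism is $(\rho'+2\sqrt{\rho'\ln(1/\delta)},\,\delta)$-DP, with sharper variants of this conversion producing the $\ln(e+\eps/\delta)$ shape rather than a bare $\ln(1/\delta)$. Substituting $\sigma^2 = 8k\Delta^2\ln(e+\eps/\delta)/\eps^2$ yields $k\rho' = \eps^2/\big(16\ln(e+\eps/\delta)\big)$, and plugging this into the conversion I would check that the recovered privacy parameter is at most the target $\eps$ at the matching $\delta$; the constant $8$ is precisely the slack needed to absorb both the $\sqrt{\cdot}$ cross term and the $\delta$-dependence.

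An alternative route that stays within the results already quoted is to combine the Gaussian Mechanism theorem (Theorem~\ref{lem:gaussian mechanism}: each step is $(\eps_0,\delta_0)$-DP once $\sigma\ge\sqrt{2\ln(1.25/\delta_0)}\,\Delta/\eps_0$) with the advanced composition theorem ($k$-fold composition of $(\eps_0,\delta_0)$-DP mechanisms is $(\eps_0\sqrt{2k\ln(1/\delta')}+k\eps_0(e^{\eps_0}-1),\,k\delta_0+\delta')$-DP), and then optimize how the total budget is split among $\eps_0$, $\delta_0$, and $\delta'$. I expect the main obstacle to be exactly this optimization. The single-step Rényi/zCDP computation and the additivity of composition are routine Gaussian calculus, but pinning down the clean constant $8$ together with the $\ln(e+\eps/\delta)$ term — rather than a looser $\ln(1/\delta)$ with a larger constant — requires either the exact optimal-composition characterization of Kairouz–Oh–Viswanath for homogeneous $(\eps_0,\delta_0)$-DP mechanisms, or a careful moments-accountant/Rényi tail argument minimized over the order $\alpha$; getting these two refinements simultaneously is where the real work lies.
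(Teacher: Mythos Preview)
The paper does not prove this statement at all: Theorem~\ref{prop:k-comp} is quoted verbatim as ``Theorem 4.3 from \cite{kairouz2017composition}'' and used as a black box in the preliminaries, so there is no proof in the paper to compare your proposal against. Your outline is therefore not competing with anything the authors wrote; it is an attempt to reconstruct a result they simply imported.

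That said, your plan is a sensible reconstruction and you are candid about where the real work is. The zCDP route you sketch (per-step $\rho=\Delta^2/(2\sigma^2)$, additivity under adaptive composition, then tail-bound conversion) will comfortably give a variance of the form $Ck\Delta^2\ln(1/\delta)/\eps^2$ for some absolute constant $C$, but as you note it will not by itself reproduce the specific constant $8$ together with the $\ln(e+\eps/\delta)$ dependence. Those features come from the optimal homogeneous-composition analysis of Kairouz--Oh--Viswanath, which characterizes the exact $(\eps,\delta)$ region attainable by $k$-fold composition and then specializes to the Gaussian mechanism; your ``alternative route'' paragraph is pointing at the right ingredient, but carrying it through is essentially reproducing their Theorem~4.3 rather than deriving it from the standard advanced-composition statement. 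If your goal is only to justify the paper's use of the theorem, citing \cite{kairouz2017composition} is all that is required; if your goal is an independent proof, you should work directly from their optimal-composition characterization rather than the zCDP conversion, since the latter will leave you with a looser constant and the wrong logarithmic term.
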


A qualitative way to understand the above theorem is that if there are $k$ differentially private mechanisms each of which is $(\varepsilon, \delta)$-DP acting on the same data set, then the overall privacy loss is roughly $\varepsilon \cdot \sqrt k$.

Unfortunately, Theorem \ref{prop:k-comp} is not the optimal bound one can achieve on the composition of private mechanisms.
A tighter analysis of composition for Gaussian mechanisms, called Gaussian Differential Privacy (GDP), was recently proposed by \cite{dong2019gaussian}.
In our experiments, GDP analysis gave us better privacy bounds. We summarize the main theorems we borrow from \cite{dong2019gaussian} below. For completeness, we compare the results from both composition methods (``EBM-Classic" and ``EBM-GDP") in Tables \ref{classification-table} and \ref{regression-table}.

\smallskip
\smallskip
\begin{theorem}
For a dataset $D$, define the Gaussian mechanism that operates on a univariate statistic $\theta$ with sensitivity $\Delta$ as $M(D) = \theta(D) + Noise$, where $\textit{Noise}$ is sampled from a Gaussian distribution  $\mathcal{N}(0, \Delta^2/\mu^2)$. Then, $M$ is $\mu$-GDP.
\label{thm:GDPdef}
\end{theorem}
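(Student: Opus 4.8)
The plan is to prove the statement directly from the trade-off-function formulation of Gaussian Differential Privacy. Recall that for distributions $P,Q$ the trade-off function is $T(P,Q)(\alpha) = \inf_{\phi}\{1 - \mathbb{E}_Q[\phi] : \mathbb{E}_P[\phi] \le \alpha\}$, the smallest type II error achievable by any test of $P$ against $Q$ whose type I error is at most $\alpha$, and that $M$ is $\mu$-GDP exactly when $T(M(D),M(D')) \ge G_\mu$ for every pair of neighboring datasets $D,D'$, where $G_\mu(\alpha) = \Phi\big(\Phi^{-1}(1-\alpha)-\mu\big)$ is the trade-off function of $\mathcal{N}(0,1)$ versus $\mathcal{N}(\mu,1)$. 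So it suffices to show the trade-off function between the two output distributions of $M$ on any neighbors is pointwise at least $G_\mu$.

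First I would fix neighbors $D,D'$ and note $M(D)\sim\mathcal{N}(\theta(D),\Delta^2/\mu^2)$ and $M(D')\sim\mathcal{N}(\theta(D'),\Delta^2/\mu^2)$. Trade-off functions are unchanged when the sample space is relabeled by a strictly increasing bijection (this is just post-processing of the observation), and in particular under the affine map $x\mapsto (x-\theta(D))\,\mu/\Delta$. Applying it reduces the testing problem to distinguishing $\mathcal{N}(0,1)$ from $\mathcal{N}(\nu,1)$ with $\nu := (\theta(D')-\theta(D))\,\mu/\Delta$, so $T(M(D),M(D')) = T(\mathcal{N}(0,1),\mathcal{N}(\nu,1))$. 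By the Neyman--Pearson lemma the likelihood-ratio test is most powerful, and for two Gaussians of equal variance the likelihood ratio is monotone in $x$, so the optimal rejection regions are half-lines and a direct computation gives $T(\mathcal{N}(0,1),\mathcal{N}(\nu,1)) = G_{|\nu|}$.

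It remains to compare $|\nu|$ with $\mu$. The sensitivity assumption gives $|\theta(D)-\theta(D')|\le\Delta$, hence $|\nu|\le\mu$, and I would then use monotonicity of $G$ in its index: for $0\le\mu_1\le\mu_2$ we have $G_{\mu_1}\ge G_{\mu_2}$ pointwise, which is immediate from $G_{\mu_1}(\alpha)=\Phi(\Phi^{-1}(1-\alpha)-\mu_1)\ge\Phi(\Phi^{-1}(1-\alpha)-\mu_2)=G_{\mu_2}(\alpha)$ since $\Phi$ is increasing. Combining, $T(M(D),M(D')) = G_{|\nu|} \ge G_\mu$ for every pair of neighbors, which is precisely $\mu$-GDP.

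The only real work is in justifying the two ingredients cited as standard: the invariance of trade-off functions under the affine reparametrization, and the exact identity $T(\mathcal{N}(0,1),\mathcal{N}(\nu,1))=G_{|\nu|}$ via Neyman--Pearson. The hard part will be the latter --- one has to verify the likelihood-ratio statistic is monotone so that optimal tests threshold the raw observation, and then push the $(\alpha,\beta)$ pair through $\Phi$ and $\Phi^{-1}$. Both facts are established in \cite{dong2019gaussian}, so once they are invoked the theorem follows purely by the reduction above with no delicate estimates.
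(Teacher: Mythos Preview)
Your argument is correct and is exactly the standard proof from \cite{dong2019gaussian}: reduce by location--scale invariance to testing $\mathcal{N}(0,1)$ against $\mathcal{N}(\nu,1)$, invoke Neyman--Pearson to identify the trade-off function as $G_{|\nu|}$, then use the sensitivity bound $|\nu|\le\mu$ together with the pointwise monotonicity $G_{\mu_1}\ge G_{\mu_2}$ for $\mu_1\le\mu_2$.

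However, the paper does not actually prove this statement. It is one of several results explicitly borrowed from \cite{dong2019gaussian} and merely quoted (see the sentence preceding Theorem~\ref{thm:GDPdef}: ``We summarize the main theorems we borrow from \cite{dong2019gaussian} below.''). So there is no paper-side proof to compare against; you have supplied a self-contained justification where the paper simply cites the source. What you wrote is a faithful sketch of the original argument in Dong et al., and the two ingredients you flag as the ``real work'' are indeed proved there.
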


If $M_1, M_2, \ldots, M_k$ are $k$ GDP mechanisms with parameters $\mu_1, \mu_2, \ldots, \mu_k$, then the following GDP composition theorem holds:

\begin{theorem}
The $k$-fold composition of $\mu_i$-GDP mechanisms is $\sqrt{\mu^2_1 + \mu^2_2 + \ldots \mu^2_k}$-GDP.
\label{thm:GDPComp}
\end{theorem}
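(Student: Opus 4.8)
The plan is to work inside the trade-off function (\emph{$f$-DP}) framework of \cite{dong2019gaussian}, of which $\mu$-GDP is the special case where the trade-off function equals $G_\mu := T(\mathcal{N}(0,1),\mathcal{N}(\mu,1))$, the optimal type-II error as a function of the type-I error for distinguishing $\mathcal{N}(0,1)$ from $\mathcal{N}(\mu,1)$. First I would recall the $f$-DP composition theorem: if $M_i$ has trade-off function bounded below by $f_i$, then the $k$-fold adaptive composition has trade-off function bounded below by the tensor product $f_1 \otimes f_2 \otimes \cdots \otimes f_k$, where $\otimes$ is associative, commutative, and monotone in each argument. Consequently it suffices to establish the algebraic identity
\[
G_{\mu_1} \otimes G_{\mu_2} \otimes \cdots \otimes G_{\mu_k} = G_{\sqrt{\mu_1^2 + \cdots + \mu_k^2}},
\]
and by associativity this reduces, via an induction on $k$, to the two-fold case $G_{\mu_1} \otimes G_{\mu_2} = G_{\sqrt{\mu_1^2 + \mu_2^2}}$.

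For the two-fold case, the key observation is that $G_{\mu_1} \otimes G_{\mu_2}$ is by definition the trade-off function of the \emph{product} testing problem: distinguishing $P = \mathcal{N}(0,1)\times\mathcal{N}(0,1) = \mathcal{N}(\mathbf{0},I_2)$ from $Q = \mathcal{N}(\mu_1,1)\times\mathcal{N}(\mu_2,1) = \mathcal{N}(\mathbf{v},I_2)$ with $\mathbf{v} = (\mu_1,\mu_2)$. By the Neyman--Pearson lemma the optimal test thresholds the likelihood ratio, which depends on the data only through the linear statistic $\langle \mathbf{v},\cdot\rangle$; this statistic is $\mathcal{N}(0,\|\mathbf{v}\|_2^2)$ under $P$ and $\mathcal{N}(\|\mathbf{v}\|_2^2,\|\mathbf{v}\|_2^2)$ under $Q$. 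Rescaling by $1/\|\mathbf{v}\|_2$ turns this into exactly a test of $\mathcal{N}(0,1)$ versus $\mathcal{N}(\|\mathbf{v}\|_2,1)$, whose trade-off function is $G_{\|\mathbf{v}\|_2} = G_{\sqrt{\mu_1^2+\mu_2^2}}$. Equivalently, one can invoke rotational invariance of the isotropic Gaussian and rotate coordinates so $\mathbf{v}$ aligns with the first axis, reducing the product experiment to $\mathcal{N}(0,1)\times\mathcal{N}(0,1)$ versus $\mathcal{N}(\sqrt{\mu_1^2+\mu_2^2},1)\times\mathcal{N}(0,1)$, whose second coordinate carries no information.

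Finally I would assemble the pieces: each $M_i$ being $\mu_i$-GDP means its trade-off function is bounded below by $G_{\mu_i}$, so monotonicity of $\otimes$ yields that the composition's trade-off function is bounded below by $G_{\mu_1}\otimes\cdots\otimes G_{\mu_k} = G_{\sqrt{\sum_i \mu_i^2}}$, i.e. the composition is $\sqrt{\sum_i \mu_i^2}$-GDP. I expect the main obstacle to be the two-fold identity: making the reduction from the bivariate to the univariate Gaussian testing problem fully rigorous requires pairing the Neyman--Pearson characterization of trade-off functions with the observation that the bivariate likelihood ratio is a monotone function of a single Gaussian sufficient statistic; the rest is bookkeeping. (Alternatively, the two-fold identity and the $f$-DP composition theorem are both proved in \cite{dong2019gaussian}, and may simply be cited.)
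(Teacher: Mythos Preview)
Your proof sketch is correct and is essentially the argument given in \cite{dong2019gaussian}: reduce $k$-fold composition to the tensor product of trade-off functions, then use the Neyman--Pearson/rotational-invariance argument to show $G_{\mu_1}\otimes\cdots\otimes G_{\mu_k}=G_{\sqrt{\sum_i\mu_i^2}}$. Note, however, that the paper you are working from does not itself prove this theorem; it is stated in Section~\ref{preliminaries} as a preliminary result ``borrowed'' from \cite{dong2019gaussian}, so the paper's ``proof'' is simply a citation---which is exactly the alternative you mention in your final parenthetical.
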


Finally, one can convert GDP guarantees to the standard $(\epsilon, \delta)$-DP guarantee using the following theorem:
\begin{theorem}
A mechanism is $\mu$-GDP if and only if it is $(\eps, \delta)$-DP where
\begin{equation*}
\label{eq:main}
\delta = \phi(-\frac{\eps}{\mu} + \frac{\mu}{2}) - e^{\eps} \phi(-\frac{\eps}{\mu} - \frac{\mu}{2}) 
\end{equation*}
\label{thmGDPtoDP}
\end{theorem}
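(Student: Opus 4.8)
Recall that ``$M$ is $\mu$-GDP'' is shorthand for the hypothesis-testing condition: for every pair of neighboring databases, the trade-off function of $M$'s outputs lies on or above $G_\mu := T\!\left(\mathcal{N}(0,1),\mathcal{N}(\mu,1)\right)$, the trade-off function for distinguishing $\mathcal{N}(0,1)$ from $\mathcal{N}(\mu,1)$. By the Neyman--Pearson lemma (the likelihood ratio of two equal-variance Gaussians is monotone in the observation), $G_\mu(\alpha)=\phi\!\left(\phi^{-1}(1-\alpha)-\mu\right)$, where $\phi$ is the standard normal CDF as written in the theorem. The goal is to translate this single trade-off-function guarantee into, and out of, the stated one-parameter family of $(\eps,\delta)$-DP guarantees.

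The main tool I would invoke is the primal--dual correspondence of \cite{dong2019gaussian}: a closed, convex, non-increasing trade-off function $f$ is \emph{equivalent} to the infinite collection of $(\eps,\delta_f(\eps))$-DP guarantees indexed by $\eps\ge 0$, where $\delta_f(\eps)=1+f^{*}(-e^{\eps})$ and $f^{*}$ is the convex conjugate of $f$ (taking $f\equiv+\infty$ outside $[0,1]$). This yields both directions at once, provided we identify $\delta$: the ``if'' direction because the $\delta$ in the statement will be shown to equal $\delta_{G_\mu}$, and the ``only if'' direction because biconjugation ($f^{**}=f$ for closed convex $f$) guarantees that the family $\{(\eps,\delta_{G_\mu}(\eps))\}_{\eps\ge0}$ carves out \emph{exactly} the epigraph of $G_\mu$, so nothing weaker than $\mu$-GDP can satisfy all of them. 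Symmetry of $G_\mu$ (it is its own inverse) makes the resulting $(\eps,\delta)$ bound automatically two-sided, as required by the definition of $(\eps,\delta)$-DP.

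It then remains to evaluate $1+G_\mu^{*}(-e^{\eps})$ explicitly. The cleanest concrete route is to compute instead the equivalent ``hockey-stick'' quantity $\delta(\eps)=\sup_{S}\big(\Pr_{Z\sim\mathcal{N}(\mu,1)}[Z\in S]-e^{\eps}\Pr_{Z\sim\mathcal{N}(0,1)}[Z\in S]\big)$. For $\mu>0$ the optimal $S$ is the likelihood-ratio super-level set $\{z:\ \mu z-\mu^2/2>\eps\}=\{z>\eps/\mu+\mu/2\}$; evaluating the two Gaussian tail probabilities at the threshold $c=\eps/\mu+\mu/2$ via $\Pr[\mathcal{N}(a,1)>c]=\phi(a-c)$ gives $\Pr_{\mathcal{N}(\mu,1)}[S]=\phi(-\eps/\mu+\mu/2)$ and $\Pr_{\mathcal{N}(0,1)}[S]=\phi(-\eps/\mu-\mu/2)$, hence the claimed identity for $\delta$.

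The main obstacle is bookkeeping rather than ideas: pinning down the convex-conjugate / hockey-stick algebra exactly (the placement of the $\mu/2$ shift, the sign inside $\phi$, and checking that the threshold set is optimal for \emph{every} $\eps\ge 0$, including the regime where the defining inequality is vacuous and $\delta(\eps)$ is to be read as a maximum with $0$). The one genuinely non-routine point is the ``only if'' direction: there the \emph{tightness} — not merely the validity — of the primal--dual correspondence in \cite{dong2019gaussian} is what upgrades ``these $(\eps,\delta)$ bounds hold'' to ``$M$ is $\mu$-GDP''.
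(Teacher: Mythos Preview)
Your proof plan is correct, but there is nothing to compare it against: the paper does not prove this theorem. It is explicitly stated as one of the results ``we borrow from \cite{dong2019gaussian}'' and is quoted without argument, serving only as a black-box conversion tool later in the privacy analysis. What you have sketched is essentially the proof that appears in Dong, Roth, and Su's original GDP paper: the primal--dual/trade-off-function correspondence combined with the explicit hockey-stick evaluation for a pair of unit-variance Gaussians via the Neyman--Pearson threshold $c=\eps/\mu+\mu/2$. So your approach is the standard one, just supplied where the present paper chose to cite rather than reprove.
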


Besides the mathematical elegance, a key advantage of GDP is that it provides a tighter analysis of composition guarantees of differentially private mechanisms.

\subsection{Notation}
For rest of the paper, we adopt the following notation. 
We denote by $H_k$ the histogram for feature $k$ and $\hat{H}_k$ for the corresponding differentially private histogram. 
We use $K$ to denote the total number of features.
By a slight abuse of notation, we write $x_i \in H_k(b)$ to mean that the data point $x_i$ belongs to the histogram bin $b$, and use $\rho(H_k, x_i)$ to look up the bin $b$ such that $x_i \in H_k(b)$.

\section{Algorithms}

To add differential privacy guarantees to EBMs, we modify the EBM training procedure Algorithm \ref{alg:ebm}, yielding DP-EBM Algorithm \ref{alg:dp-ebm}.
We first modify the pre-processing procedure to generate differentially private bins (published as histograms $\hat{H}_k$ per feature), which log the bin ranges and how many data points fall in each bin (line 7).
Next, we analyze the boosting process. 
In traditional tree building, there are two major data-intensive operations: learning the structure of the tree (what feature and feature threshold to install at each node in the tree), and calculating the predicted value of each leaf node \cite{breiman1984classification}. 
Prior work on differentially private tree learning typically splits budget between choosing which features to split on, where to split them, and learning prediction values for each leaf node \cite{fletcher2015differentially, wangscalable, li2020privacy}. 

EBMs naturally avoid spending any privacy budget on choosing which features to include in each tree -- the ``round-robin" schedule of visiting features is completely data agnostic. Furthermore, by choosing the splitting thresholds at random, we can learn the entire structure of each tree without looking at any training data (line 14). 
Prior work and our empirical evaluations both show that choosing random splits results in little accuracy loss \cite{geurts2006extremely, fletcher2019decision}.
We therefore spend the entirety of our budget per iteration on learning the values for each leaf node, which are simply averages of the residuals for the data belonging to each node (lines 16-18).
For each leaf, we sum the residuals of data belonging to that leaf, add calibrated Gaussian noise based on their bounded sensitivity $R$, and divide by a differentially private count of data in each leaf (contained in the previously published $\hat{H}_k$).
Then, as in non-private EBMs, the noisy tree is merged into the feature function $f_k$ (line 20), and the cyclic boosting procedure moves onto the next feature and continues for all $E$ epochs. 
The pseudocode for the DP-EBM algorithm is described in Algorithm \ref{alg:dp-ebm}, with modifications to the non-private version highlighted in blue.

We now provide the privacy analysis of our algorithm using the GDP framework.
Our proof of privacy has the following two components.
First we fix a single iteration of the algorithm (lines 13-26), and show that each iteration is $\frac{1}{\sigma}-GDP$.
At the end of each iteration, we publicly release the functions $f^t_k$ for all $k$.
Note that although the final model only uses the $f^t_k$ values of the {\em last iteration}, releasing every $f^t_k$ leads to a simpler privacy analysis.
Next, we calculate the total privacy loss of our algorithm by simply viewing it as a composition of $E \cdot K$ private mechanisms.
It is important to note that composition theorems work even when the mechanisms depend on each other.

\begin{theorem}
Each iteration of our algorithm is $\frac{1}{\sigma}-GDP$.
\end{theorem}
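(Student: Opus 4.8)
\emph{Proof proposal.} The plan is to fix one iteration (lines 13--26, processing a single feature $k$) and strip away everything that is either already public or produced without touching the raw data, reducing the claim to a single application of the Gaussian mechanism of Theorem \ref{thm:GDPdef}. First I would condition on all quantities that are not a fresh data access in this iteration: the differentially private histogram $\hat H_k$ (whose privacy cost is paid separately through $\eps_{bin}$), the functions $f^{t-1}_{k'}$ released in earlier iterations, and the splits $S_0,\dots,S_m$. The splits are the crucial point: DP-EBMs pick the splitting thresholds uniformly at random (line 14), so $S_0,\dots,S_m$ are generated without looking at $X$ or $y$ and may be conditioned on for free. After this conditioning, the only place the raw data enters the iteration is in forming the scaled residual sums $T_\ell = \eta\sum_{b\in S_\ell}\sum_{x_i\in\hat H_k(b)} r^t_i$ for $\ell=0,\dots,m$.

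Next I would invoke post-processing. Conditioned on the public state above, the noisy sums $\hat T_\ell = T_\ell + \sigma\eta R\cdot\mathcal N(0,1)$ determine the leaf averages $\mu_\ell = \hat T_\ell/\sum_{b\in S_\ell}\hat H_k(b)$ (the denominators are functions of the already-released $\hat H_k$), hence the updated shape function $f^t_k$, and hence the recomputed residuals $r^{t+1}_i$, all via a fixed deterministic map. Crucially, the only object released during the iteration is $f^t_k$ (lines 20--21); the residuals stay internal and feed only into the next iteration, which the composition treats as a separate (possibly adaptive) mechanism. So by the post-processing invariance of GDP it suffices to show that releasing the vector $(\hat T_0,\dots,\hat T_m)$ is $\tfrac1\sigma$-GDP.

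For the final step I would bound sensitivity. A tree with $m$ splits has $m+1$ leaves that \emph{partition} the bins of feature $k$, and every data point lies in exactly one bin, so each $x_i$ contributes to exactly one of $T_0,\dots,T_m$. Hence two neighboring datasets differing in a single record $x_i$ agree on every coordinate of $(T_0,\dots,T_m)$ except the one indexed by the leaf $\ell^\star$ containing $x_i$, where $|T_{\ell^\star}(D_1)-T_{\ell^\star}(D_2)| = \eta|r^t_i| \le \eta R$ because residuals are bounded in magnitude by the label range $R$. The untouched coordinates are, conditionally, independent Gaussian noise with the same law under $D_1$ and $D_2$, so the hypothesis-testing problem collapses to distinguishing $T_{\ell^\star}(D_1)+\mathcal N(0,(\sigma\eta R)^2)$ from $T_{\ell^\star}(D_2)+\mathcal N(0,(\sigma\eta R)^2)$ --- exactly a univariate Gaussian mechanism with sensitivity $\Delta=\eta R$ and noise variance $\Delta^2/\mu^2$ for $\mu=1/\sigma$. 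Theorem \ref{thm:GDPdef} then yields $\tfrac1\sigma$-GDP for the release of $(\hat T_0,\dots,\hat T_m)$, and post-processing upgrades this to the whole iteration. (Equivalently one could quote the multivariate Gaussian mechanism: $(T_0,\dots,T_m)$ has $\ell_2$-sensitivity $\eta R$ and the noise is spherical with standard deviation $\sigma\eta R$.)

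I expect the main obstacle to be articulating cleanly why the $m+1$ noisy sums cost only $\tfrac1\sigma$-GDP rather than $\tfrac{\sqrt{m+1}}{\sigma}$-GDP: this is precisely where the disjointness of the leaves (a parallel-composition phenomenon) must be used, and it is easy to over-count if one naively views the $m+1$ Gaussian draws as $m+1$ independent mechanisms on the data. A secondary point is to pin down the neighboring relation and the residual bound: the clean statement above uses the ``add or remove one record'' relation, under which the single changed coordinate moves by at most $\eta R$, and one should confirm the algorithm enforces $|r^t_i|\le R$ (e.g.\ by clipping residuals to the label range) so that the noise scale $\sigma\eta R$ is correctly calibrated. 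Composition of the $E\cdot K$ iterations into the overall $\tfrac{\sqrt{EK}}{\sigma}$-GDP bound is then a separate step handled by Theorem \ref{thm:GDPComp}.
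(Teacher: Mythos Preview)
Your proposal follows the same skeleton as the paper: condition on the public state (DP histograms, previously released shape functions, and the data-independent random splits), observe that the only fresh data access is the per-leaf residual sums, use the disjointness of leaves to argue a single record touches exactly one coordinate, and invoke Theorem~\ref{thm:GDPdef}. On that level the two proofs coincide.

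The substantive difference is how the sensitivity bound $\eta R$ is obtained. You assert $|r^t_i|\le R$ directly and (correctly) flag that this requires the algorithm to clip residuals; without clipping your inequality $\eta|r^t_i|\le \eta R$ is not justified. The paper does \emph{not} assume a residual bound. Instead it expands $r^t_i = y_i - \sum_{k'} f^{t-1}_{k'}(\rho(\hat H_{k'},x_i))$ inside the sum and writes $T=\eta\sum y_i - Z$, where $Z$ is built entirely from the already-released $f^{t-1}_{k'}$ and $\hat H_{k'}$. The paper then argues that $Z$ ``does not depend on the user data,'' so the sensitivity of $T$ is that of $\eta\sum y_i$, which is $\eta R$ by the label-range assumption alone. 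This decomposition is the paper's device for avoiding residual clipping altogether, and it is the one idea in the paper's proof that your proposal does not reproduce. What it buys is that no clipping step needs to be inserted into Algorithm~\ref{alg:dp-ebm}; what your route buys is that the argument goes through verbatim under any neighboring relation once clipping is in place, without having to argue that the prediction term is ``public.'' Both lead to the same $\tfrac{1}{\sigma}$-GDP conclusion.
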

\begin{proof}
We observe that calculating $T$ in the line 16 is the only step of our algorithm where we access the sensitive information of the users. 
Thus to prove the theorem we need to argue that the noise we are adding satisfies requirements of Theorem \ref{thm:GDPdef}.
Consider
\begin{eqnarray*}
T &=& \eta \cdot \sum_{b \in S_{\ell}} \sum_{x_i \in \hat{H}_k (b)} {r^{t}_i} \\
  &=& \eta \cdot \sum_{b \in S_{\ell}} \sum_{x_i \in \hat{H}_k (b)} \left( y_i - \sum_{k}{f^{t-1}_k(\rho(\hat{H}_k,x_i))} \right)\\ 
  &=& \eta \cdot \left(\sum_{b \in S_{\ell}} \sum_{x_i \in \hat{H}_k (b)} y_i \right)  - Z\\ 
\end{eqnarray*}

The second equality follows from the definition of  $r^t_i$ as given in the line 25 of the algorithm.
Further, $Z$ is computed using publicly released $f^{t-1}_k$ values from the iteration $t-1$, and hence does not depend on the user data.
Therefore, the amount of noise we need to add to the statistic $T$ depends on the sensitivity of quantity $\left(\sum_{b \in S_{\ell}} \sum_{x_i \in \hat{H}_k (b)} y_i \right)$, which we argue is bounded by at most $R$. 
This follows from three simple facts: 1) The range of each $y_i$ is bounded by at most $R$; 2) For each feature, each user contributes exactly to one bin of the histogram; 3) Random splits performed in line 14 of our algorithm partition the histogram bins into disjoint splits, hence each users' data belongs to precisely one split.  
The proof now follows from Theorem \ref{thm:GDPdef}.
\end{proof}

\begin{theorem}
Our algorithm is $\frac{\sqrt {E \cdot K}}{\sigma}-GDP$.
\end{theorem}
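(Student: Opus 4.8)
The plan is to view the entire execution of Algorithm \ref{alg:dp-ebm} as the adaptive composition of its individual boosting iterations, and then invoke the GDP composition theorem. The previous theorem already establishes that a single iteration (lines 13--26), which publicly releases the updated shape-function values $f^t_k$, is $\frac{1}{\sigma}$-GDP. Since the outer loop runs for $E$ epochs and the inner loop cycles through all $K$ features, the algorithm performs exactly $E \cdot K$ such iterations, and every other line either touches only public quantities (the randomly chosen splits $S_0,\dots,S_m$, the published histograms $\hat{H}_k$, the already-released $f^{t-1}_k$) or is pure post-processing of released outputs (the residual update on line 25, which is a deterministic function of the $y_i$ and the public $f^{t-1}_k$). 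Hence no privacy budget is spent outside these $E \cdot K$ iterations.

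First I would formalize the decomposition: write the output of the algorithm as the tuple of all released $f^t_k$ values, and observe that this tuple is produced by a sequence $M_1, M_2, \dots, M_{EK}$ in which $M_j$ is the Gaussian mechanism of iteration $j$ and each $M_j$ may depend on the outputs of $M_1, \dots, M_{j-1}$ (it reads the previous shape functions to form the residuals). Second, I would note that each $M_j$ is $\mu_j$-GDP with $\mu_j = \frac{1}{\sigma}$ by the previous theorem. Third, I would apply Theorem \ref{thm:GDPComp} with $k = E \cdot K$ and $\mu_1 = \dots = \mu_{EK} = \frac{1}{\sigma}$, obtaining that the composition is $\sqrt{\mu_1^2 + \dots + \mu_{EK}^2} = \sqrt{\tfrac{EK}{\sigma^2}} = \frac{\sqrt{E \cdot K}}{\sigma}$-GDP. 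Finally, since the trained model uses only the last iteration's $f^t_k$ values, it is post-processing of the released tuple, and GDP is closed under post-processing, so the model itself inherits the same guarantee.

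The main obstacle --- and essentially the only subtlety --- is justifying that GDP composition applies in the \emph{adaptive} setting, since the noise in iteration $j$ is added to a statistic that depends on the noisy outputs of all earlier iterations through the residuals $r^t_i$. This is exactly why the algorithm and its analysis are set up to release $f^t_k$ publicly at the end of every iteration: conditioned on that release, the next iteration is, as a function of the data, a fresh Gaussian mechanism with $\ell_2$-sensitivity at most $R$ (the argument of the previous theorem goes through verbatim, since the subtracted term $Z$ depends only on the public $f^{t-1}_k$), so the standard adaptive-composition form of Theorem \ref{thm:GDPComp} covers it. The remaining bookkeeping --- that the $DPBin$ pre-processing on line 7 either consumes a separate, disjoint budget $\eps_{bin}$ or is analyzed so that the histograms $\hat{H}_k$ count as public inputs to the boosting phase --- is orthogonal to this particular statement and, if an end-to-end bound is desired, can be folded in with one further composition step over the GDP parameters.
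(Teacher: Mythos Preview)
Your proposal is correct and follows exactly the paper's approach: invoke the previous theorem to get that each of the $E\cdot K$ iterations is $\tfrac{1}{\sigma}$-GDP, then apply the GDP composition theorem (Theorem~\ref{thm:GDPComp}) to obtain $\tfrac{\sqrt{E\cdot K}}{\sigma}$-GDP. The paper's own proof is the one-line version of this; your additional remarks about adaptivity, post-processing, and the binning budget are sound elaborations but not a different argument.
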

\begin{proof}
As each iteration of our algorithm is $\frac{1}{\sigma}-GDP$, the proof follows from the composition of GDP-mechanisms as given in Theorem \ref{thm:GDPComp} over all $E \cdot K$ iterations.
\end{proof}

The GDP bounds can be converted into $(\epsilon, \delta)$-DP guarantees using Theorem \ref{thmGDPtoDP}.
To calibrate $\sigma$ in line 17, we fix the $\eps$ and $\delta$ privacy parameters we want to achieve, use Theorem \ref{thmGDPtoDP} to calculate $\mu$, and finally calculate $\sigma$ by setting $\mu$ = $\frac{\sqrt{E \cdot K}}{\sigma}$.

\section{Experiments}

\begin{figure*}[!t]
\centering
    \includegraphics[width=\textwidth]{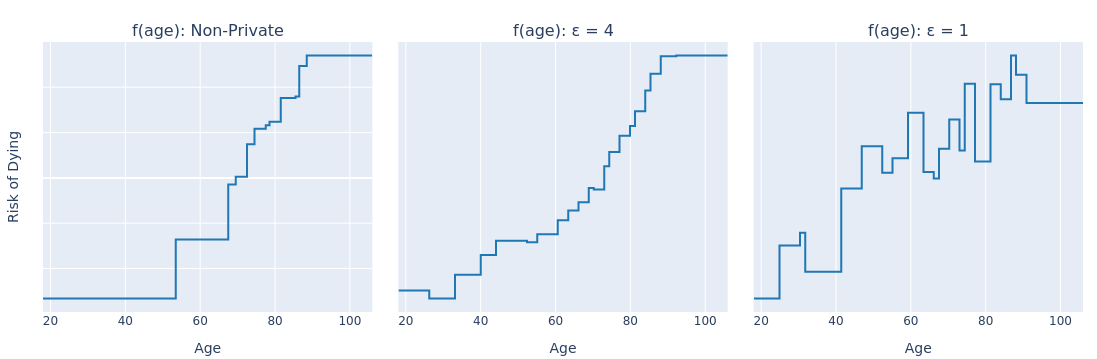}
  \caption{Risk of dying as a function of age from three EBMs trained on the healthcare dataset with varying privacy guarantees.}
  \label{fig:age-compare}
\end{figure*}

We compare the following algorithms on four classification and four regression datasets:

\begin{itemize}
    \item DP-EBM (Algorithm \ref{alg:dp-ebm}): We use the following (default) parameters for all experiments: max\_bins = 32, learning\_rate = 0.01, n\_epochs = 300, max\_leaves = 3, with 10\% of the total privacy budget allocated to binning and 90\% to training. We present two results for DP-EBMs: ``EBM-Classic", where we apply strong composition from \cite{kairouz2017composition}, and ``EBM-GDP", where composition is more tightly tracked via Gaussian Differential Privacy \cite{dong2019gaussian}.
    \item Generalized Linear Models: Linear and Logistic Regression are widely used methods for interpretable machine learning. For both models, we use IBM's differential privacy library \cite{ibm2019diffprivlib} which follows the algorithms described in \cite{sheffet2015private, imtiaz2016symmetric} for linear regression and in \cite{chaudhuri2011differentially} for logistic regression.
    \item DP Boost: DPBoost is a differentially private gradient boosted decision tree algorithm introduced by \cite{li2020privacy}. DPBoost builds on top of LightGBM, a popular tree-based boosting framework \cite{ke2017lightgbm}.
\end{itemize}

To evaluate performance, we generate 25 randomly drawn 80/20 train-test splits and report the average test-set accuracy and standard deviation at varying $\varepsilon$ and fixed $\delta = 10^{-6}$. Results are presented  in Tables \ref{classification-table} and \ref{regression-table} using root mean squared error (RMSE) as the metric for regression and area under the ROC curve (AUROC) for classification.

All models were trained using default or recommended parameters from the literature or open source repositories. Hyperparameter tuning is a privacy-sensitive operation, and how to best partition budget and tune parameters of differentially private models is an open research problem \cite{liu2019private, kusner2015differentially}. 
We did not tune hyperparameters to avoid this complexity. 
The default parameters for DP-EBMs appear to work well on a variety of datasets, which helps conserve the privacy budget and also makes DP-EBMs easy to use in practice.

The datasets used in these experiments (with the exception of the healthcare data, which contains real patient data) are publicly available and summarized in Table \ref{table:dataset_statistics}. We include results from the private healthcare dataset to highlight how these models might perform in a high stakes setting where both privacy and interpretability are critical. 

\begin{table}[H]
\caption{Experiment dataset statistics and descriptions.}
\vspace{5pt}
\centering
\begin{small}

\begin{tabular}{c|ccccc}
Dataset & Domain & N & K & Task & \% Pos \\ \toprule
Adult Income \nocite{Dua:2019} & Finance & 32,561 & 14 & Clas & 24.1\% \\
Telco Churn & Business & 7,043 & 19 & Clas & 26.6\% \\
Credit & Finance & 284,807 & 30 & Clas & 0.2\% \\
Healthcare & Medicine & 14,199 & 46 & Clas & 10.9\% \\ \midrule
Cal-Housing & Real Estate & 20,499 & 8 & Reg & --\\
Elevators & Systems & 16,598 & 18 & Reg & --\\
Pol & Business & 15,000 & 49 & Reg & --\\
Wine-Quality & Food & 5,297 & 11 & Reg & -- \\

\label{table:dataset_statistics}

\end{tabular}

\end{small}
\end{table}

\subsection{Discussion of Experimental Results}

As shown in Figure 1 and again in more detail in Tables 2 and 3, DP-EBMs exhibit strong performance across a wide range of $\varepsilon$ values. 
In classification, the difference in AUROC between non-private DP-EBMs and even $\varepsilon = 0.5$ DP-EBMs is at most 0.05, which is a comparatively modest tradeoff for strong privacy guarantees. 
At a more modest $\eps=4$, the average AUROC for DPBoost, Logistic Regression and DP-EBMs is 0.62, 0.56 and 0.88, respectively.
The datasets chosen are not particularly favorable for EBMs -- in the non-private setting, LightGBM outperforms EBMs in over half of our experiments. However, when training with differential privacy, DP-EBMs outperform other models in all 40 experimental settings. In the following sections, we offer some thoughts on why the DP-EBM algorithm might be comparatively well suited for differential privacy. %

\subsubsection{DP-EBMs vs Linear Models}

In the case of DP linear and logistic regression -- the current standard for intelligible and private learning -- we believe the additional accuracy of DP-EBMs might be explained by the extra flexibility of the GAM model class. 
In the non-private setting, the non-linear functions learned by GAMs often result in a boost in accuracy over the linear functions learned by linear models, and this advantage appears to translate to the private setting as well. In addition, the iterative nature of gradient boosting might give DP-EBMs the ability to recover from the impact of noise earlier in training. 

\subsubsection{DP-EBMs vs DPBoost}

Although it may not be surprising that DP-EBMs outperform restricted models such as DP linear regression, it is a little surprising that DP-EBMs outperform DPBoost which is a less restricted model class than DP-EBMs. We believe this might be due to the significant privacy budget savings when learning each tree. Unlike other DP tree-based learning algorithms, DP-EBMs spend no budget learning the structure of each tree, and focus exclusively on learning the best leaf node values. In addition, by growing shallow trees, each leaf often contains a large proportion of the dataset -- with default parameters of 2 random splits, in expectation each leaf contains $\approx\frac{1}{3}$ of the full data. This ensures that the impact of noise in the differentially private average calculated per iteration is dispersed across many training datapoints. In contrast, each tree in LightGBM/DPBoost contains up to 31 leaf nodes, thereby operating on much smaller amounts of data and magnifying the impact of noise. 

\subsubsection{DP-EBM: Classic Composition vs GDP}

We also compare two variants of privacy budget tracking in DP-EBMs: ``EBM-Classic", which uses strong composition from \cite{kairouz2017composition}, and ``EBM-GDP" which uses Gaussian differential privacy recently proposed by \cite{dong2019gaussian}. While we can show analytically that budget tracking with GDP is tighter for our algorithm (and therefore requires less noise for the same privacy loss), it is interesting that the differences in final model accuracy are typically only noticeable in strong privacy settings ($\varepsilon \leq 1$).

\begin{table*}
\caption{Area Under the ROC Curve (AUROC) algorithm comparison on classification datasets. Higher is better.}
\label{classification-table}
\vskip 0.15in
\begin{center}
\begin{small}
\begin{sc}

\begin{tabular}{cccccc}
\toprule
dataset & $\eps$ &     DPBoost     &     Logistic Regression      &  DPEBM-classic      &     DPEBM-gdp     \\
\midrule
\multirow{6}{*}{adult-income} & 0.5   &  0.558 $\pm$ 0.045 &      0.488 $\pm$ 0.125 &  0.873 $\pm$ 0.007 &  \textbf{0.875 $\pm$ 0.005 } \\
         & 1.0   &  0.566 $\pm$ 0.034 &      0.471 $\pm$ 0.111 &  0.880 $\pm$ 0.006 &  \textbf{0.883 $\pm$ 0.005 } \\
         & 2.0   &  0.629 $\pm$ 0.045 &      0.521 $\pm$ 0.109 &  0.886 $\pm$ 0.005 &  \textbf{0.887 $\pm$ 0.004 } \\
         & 4.0   &  0.734 $\pm$ 0.019 &      0.549 $\pm$ 0.068 &  \textbf{0.889 $\pm$ 0.004} &  \textbf{0.889 $\pm$ 0.004 } \\
         & 8.0   &  0.805 $\pm$ 0.011 &      0.534 $\pm$ 0.070 &  \textbf{0.890 $\pm$ 0.004} &  \textbf{0.890 $\pm$ 0.004 } \\
         & \textit{Non-Private} &  \textbf{0.928 $\pm$ 0.003} &   0.603 $\pm$ 0.066 &  0.923 $\pm$ 0.003 &  0.923 $\pm$ 0.003 \\
\midrule
\multirow{6}{*}{credit-fraud} & 0.5   &  0.442 $\pm$ 0.138 &      0.558 $\pm$ 0.076 &  0.959 $\pm$ 0.015 &  \textbf{0.966 $\pm$ 0.012 } \\
         & 1.0   &  0.438 $\pm$ 0.114 &      0.544 $\pm$ 0.135 &  0.965 $\pm$ 0.014 &  \textbf{0.966 $\pm$ 0.013 } \\
         & 2.0   &  0.467 $\pm$ 0.101 &      0.526 $\pm$ 0.118 &  \textbf{0.969 $\pm$ 0.011} &  \textbf{0.969 $\pm$ 0.011 } \\
         & 4.0   &  0.465 $\pm$ 0.142 &      0.539 $\pm$ 0.172 &  \textbf{0.969 $\pm$ 0.011} &  \textbf{0.969 $\pm$ 0.011 } \\
         & 8.0   &  0.556 $\pm$ 0.145 &      0.546 $\pm$ 0.156 &  \textbf{0.969 $\pm$ 0.011} &  \textbf{0.969 $\pm$ 0.011 } \\
         & \textit{Non-Private} &  0.726 $\pm$ 0.099 &      0.922 $\pm$ 0.019 &  \textbf{0.965 $\pm$ 0.011} &  \textbf{0.965 $\pm$ 0.011} \\
\midrule
\multirow{6}{*}{Healthcare} & 0.5   &  0.515 $\pm$ 0.054 &      0.463 $\pm$ 0.081 &  0.714 $\pm$ 0.036 &  \textbf{0.793 $\pm$ 0.018 } \\
         & 1.0   &  0.505 $\pm$ 0.051 &      0.479 $\pm$ 0.071 &  0.789 $\pm$ 0.016 &  \textbf{0.818 $\pm$ 0.012 } \\
         & 2.0   &  0.499 $\pm$ 0.046 &      0.495 $\pm$ 0.081 &  0.822 $\pm$ 0.012 &  \textbf{0.830 $\pm$ 0.011 } \\
         & 4.0   &  0.567 $\pm$ 0.047 &      0.542 $\pm$ 0.038 &  0.834 $\pm$ 0.011 &  \textbf{0.835 $\pm$ 0.010 } \\
         & 8.0   &  0.638 $\pm$ 0.036 &      0.529 $\pm$ 0.048 &  0.836 $\pm$ 0.010 &  \textbf{0.837 $\pm$ 0.010 } \\
         & \textit{Non-Private} &  0.836 $\pm$ 0.011 &      0.744 $\pm$ 0.014 &  \textbf{0.847 $\pm$ 0.010} &  \textbf{0.847 $\pm$ 0.010 } \\
\midrule
\multirow{6}{*}{telco-churn} & 0.5   &  0.484 $\pm$ 0.100 &      0.541 $\pm$ 0.227 &  0.812 $\pm$ 0.020 &  \textbf{0.829 $\pm$ 0.014 } \\
            & 1.0   &  0.458 $\pm$ 0.088 &      0.479 $\pm$ 0.239 &  0.832 $\pm$ 0.013 &  \textbf{0.835 $\pm$ 0.011 } \\
            & 2.0   &  0.534 $\pm$ 0.109 &      0.527 $\pm$ 0.236 &  0.837 $\pm$ 0.010 &  \textbf{0.838 $\pm$ 0.012 } \\
            & 4.0   &  0.716 $\pm$ 0.067 &      0.615 $\pm$ 0.138 &  0.838 $\pm$ 0.011 &  \textbf{0.839 $\pm$ 0.011 } \\
            & 8.0   &  0.787 $\pm$ 0.014 &      0.673 $\pm$ 0.105 &  \textbf{0.839 $\pm$ 0.011} &  \textbf{0.839 $\pm$ 0.011} \\
            & \textit{Non-Private} &  0.836 $\pm$ 0.008 &      0.844 $\pm$ 0.010 &  \textbf{0.848 $\pm$ 0.009} &  \textbf{0.848 $\pm$ 0.009 } \\
\bottomrule
\end{tabular}

\end{sc}
\end{small}
\end{center}
\vskip -0.1in
\end{table*}
\begin{table*}
\caption{Root Mean Squared Error (RMSE) algorithm comparison on regression datasets. Lower is better.}
\label{regression-table}
\vskip 0.15in
\begin{center}
\begin{small}
\begin{sc}

\begin{tabular}{cccccc}
\toprule
dataset & $\eps$ &      DPBoost         &         Linear Regression         &      DPEBM-classic             &         DPEBM-gdp         \\
\midrule
\multirow{6}{*}{cal-housing} & 0.5   &  383072 $\pm$ 41952 &  111967 $\pm$ 1080 &  85652 $\pm$ 2724 &  \textbf{79967 $\pm$ 1929} \\
             & 1.0   &  204277 $\pm$ 19350 &  110241 $\pm$ 1101 &  78527 $\pm$ 1230 &  \textbf{76827 $\pm$ 1470} \\
             & 2.0   &   122494 $\pm$ 7066 &  109518 $\pm$ 1244 &  75491 $\pm$ 1404 &  \textbf{74573 $\pm$ 1152} \\
             & 4.0   &    96336 $\pm$ 3043 &  108882 $\pm$ 1370 &  73967 $\pm$ 1028 &  \textbf{73754 $\pm$ 1022} \\
             & 8.0   &    90029 $\pm$ 2508 &  107815 $\pm$ 1460 &  73327 $\pm$ 1118 &  \textbf{ 73165 $\pm$ 955 }\\
             & \textit{Non-Private} &     \textbf{47007 $\pm$ 885} &   69850 $\pm$ 1164 &   51644 $\pm$ 925 &   51644 $\pm$ 925 \\
\midrule
\multirow{6}{*}{elevators} & 0.5   &           0.051 $\pm$ 0.005 &          4.671 $\pm$ 1.975 &         0.010 $\pm$ 0.001 &  \textbf{0.006 $\pm$ 0.000 } \\
             & 1.0   &           0.025 $\pm$ 0.002 &          2.669 $\pm$ 1.214 &         0.007 $\pm$ 0.000 &         \textbf{0.005 $\pm$ 0.000 } \\
             & 2.0   &           0.013 $\pm$ 0.001 &          1.384 $\pm$ 0.570 &         0.006 $\pm$ 0.000 &         \textbf{0.005 $\pm$ 0.000 } \\
             & 4.0   &           0.008 $\pm$ 0.000 &          0.754 $\pm$ 0.202 &         0.005 $\pm$ 0.000 &         \textbf{0.004 $\pm$ 0.000 } \\
             & 8.0   &           0.006 $\pm$ 0.000 &          0.410 $\pm$ 0.201 &        \textbf{ 0.004 $\pm$ 0.000} &         \textbf{0.004 $\pm$ 0.000 } \\
             & \textit{Non-Private} &           \textbf{0.002 $\pm$ 0.000} &          0.003 $\pm$ 0.000 &         0.004 $\pm$ 0.000 &         0.004 $\pm$ 0.000 \\
\midrule
\multirow{6}{*}{pol} & 0.5   &          78.190 $\pm$ 9.583 &         31.326 $\pm$ 0.418 &        35.156 $\pm$ 1.728 &        \textbf{30.988 $\pm$ 0.962 } \\
             & 1.0   &          50.527 $\pm$ 5.482 &         30.640 $\pm$ 0.288 &        30.911 $\pm$ 1.014 &        \textbf{28.391 $\pm$ 0.585 } \\
             & 2.0   &          47.511 $\pm$ 4.636 &         30.500 $\pm$ 0.248 &        27.616 $\pm$ 0.644 &        \textbf{26.303 $\pm$ 0.561 } \\
             & 4.0   &          45.592 $\pm$ 2.942 &         30.463 $\pm$ 0.256 &        25.454 $\pm$ 0.389 &        \textbf{24.934 $\pm$ 0.332 } \\
             & 8.0   &          45.435 $\pm$ 1.109 &         30.459 $\pm$ 0.258 &        24.625 $\pm$ 0.230 &        \textbf{24.313 $\pm$ 0.237 } \\
             & \textit{Non-Private} &           \textbf{4.703 $\pm$ 0.228} &         30.464 $\pm$ 0.264 &        13.780 $\pm$ 0.667 &        13.780 $\pm$ 0.667 \\
\midrule
\multirow{6}{*}{wine-quality} & 0.5   &           4.647 $\pm$ 0.390 &          3.621 $\pm$ 1.740 &         1.589 $\pm$ 0.132 &         \textbf{0.938 $\pm$ 0.036 } \\
             & 1.0   &           2.151 $\pm$ 0.302 &          2.133 $\pm$ 0.713 &         1.181 $\pm$ 0.074 &         \textbf{0.841 $\pm$ 0.025 } \\
             & 2.0   &           1.299 $\pm$ 0.092 &          1.263 $\pm$ 0.322 &         0.935 $\pm$ 0.042 &         \textbf{0.779 $\pm$ 0.018 } \\
             & 4.0   &           0.946 $\pm$ 0.043 &          0.940 $\pm$ 0.100 &         0.807 $\pm$ 0.019 &         \textbf{0.746 $\pm$ 0.011 } \\
             & 8.0   &           0.847 $\pm$ 0.021 &          0.839 $\pm$ 0.035 &         0.751 $\pm$ 0.013 &         \textbf{0.733 $\pm$ 0.014 } \\
             & \textit{Non-Private} &           \textbf{0.622 $\pm$ 0.013} &          0.759 $\pm$ 0.015 &         0.681 $\pm$ 0.012 &         0.681 $\pm$ 0.012 \\
\bottomrule
\end{tabular}

\end{sc}
\end{small}
\end{center}
\vskip 0.1in
\end{table*} 
\begin{figure*}
\centering
    \includegraphics[width=\textwidth]{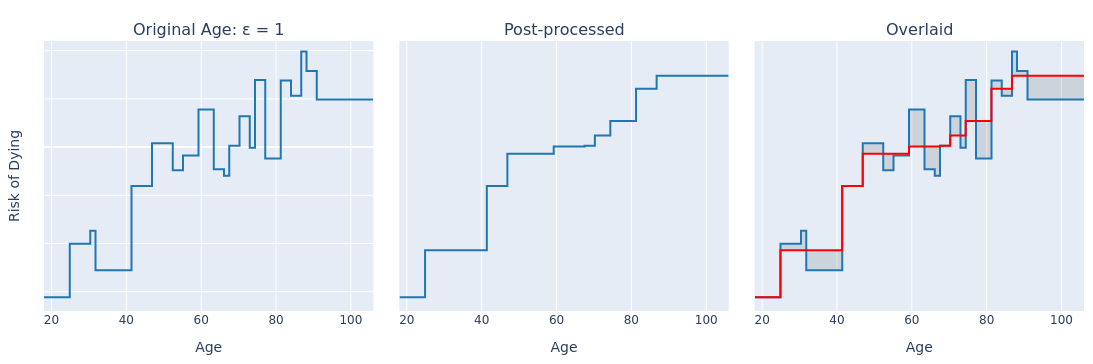}
  \caption{
  (Left) Original learned $f_{age}$ from the healthcare dataset with $\eps = 1$. 
  (Middle, Right) Postprocessed to enforce monotonicity.}
  \label{fig:postprocessed}
\end{figure*}

\section{Discussion}

\subsection{Editing DP-EBM Models}

While this paper has primarily focused on introducing and comparing DP-EBMs in terms of standard machine learning metrics, we believe it is important to highlight the unique capabilities that arise when combining interpretability with differential privacy. For example, recent work has shown that adding differentially private noise to machine learning algorithms can disproportionately impact minority groups \cite{cummings2019compatibility, bagdasaryan2019differential}. This concern is compounded when models are trained and deployed in high-risk domains like finance and healthcare -- even small errors on sparse regions of the feature space can have disatrous consequences. 

With intelligible models like DP-EBMs, some effects of noise on predictions are visible. 
Figure \ref{fig:age-compare} shows the shape function DP-EBMs learned for the ``Age" feature in the healthcare dataset at 3 different levels of privacy. In many healthcare problems, risk should monotonically increase as a function of age. While the non-private and $\eps = 4$ models learn this behavior, there is significant distortion as a result of differentially private noise at $\eps = 1$. In this example, the $\eps = 1$ model suggests that patients who are 80 are considerably lower risk than those who are 77 or 82, which is almost certainly not a real signal in the dataset. By using an intelligible model, domain experts can inspect the shape function $f_{age}$ and prevent deploying a risky model that under predicts on 80 year olds.

In addition to catching errors, domain experts can also correct unwanted learned effects without impacting privacy. 
Because the shape function $f_{age}$ exactly describes how a model makes predictions, users can edit graphs of any feature to change the model. In our example, modifying y-axis value for $f_{age}$ at ages 78-81 to remove the unwanted blip would remove this noise bias from the model. 
This form of model editing uses no data, and therefore results in no additional privacy loss under the post-processing property of differential privacy \cite{dwork2006calibrating}. The ability to safely inspect and edit DP-EBM models before deployment is important for creating trust in differentially private models in high risk situations because some of the impacts of noise can be corrected.

\subsection{Constraints such as monotonicity}

More complex forms of editing are also possible. For example, we can ensure monotonicity across the entire feature by borrowing from the calibration literature and applying \emph{isotonic regression} on the shape function \cite{chakravarti1989isotonic}. 
Isotonic regression uses the Pool Adjacent Violators (PAV) algorithm to minimize the edits necessary to ensure monotonicity, and is optimal with respect to squared error on the differences between the two functions \cite{vaneeden1958testing}. Importantly, this process only uses public information from DP-EBMs -- the learned shape functions $f_k$, and the histogram definition $\hat{H}_k$. 

Figure \ref{fig:postprocessed} shows the effects of applying isotonic regression to the noisy $f_{age}$. 
While enforcing monotonicity is possible with models more expressive than DP-EBMs, this typically requires additional constraints during training and may consume more privacy budget and complicate the privacy analysis. 
It is a nice advantage of DP-EBMs that monotonicity can be achieved cleanly via post-processing. %

\subsection{Differential Privacy as a Regularizer}
Figures \ref{fig:age-compare} and \ref{fig:adult_all} also show that adding modest amounts of differentially private noise, like $\eps = 4$, can act as a regularizer to the model. 
The rise in risk between ages 50 and 90 here is smoother, whereas the non-private version learns a ``jumpier" function. 
Smoothness is traditionally a difficult property to achieve with tree-based boosted GAMs. 
The non-private EBM algorithm typically wraps the training process in multiple iterations of bagging to make graphs smoother \cite{lou2012intelligible, caruana2015intelligible}. 
Our experiments suggest that modest amounts of differentially private noise might act as an effective regularization tool.

The relationship between smoothness and interpretability is complex: smooth graphs may be easier to interpret, but over-regularization can hide real signals in the data. 
The use of differential privacy as a regularizer is well known \cite{chaudhuri2011differentially}. Our paper visibly reinforces this notion through intelligibility, and we believe studying this effect further on GAMs might be interesting future research.

\section{Conclusion}
We present DP-EBMs, a differentially private learning algorithm for GAMs which achieves remarkably high accuracy and interpretability with strong privacy guarantees.
Our empirical evaluations show that DP-EBMs outperform other differentially private learning algorithms for both classification and regression on tabular datasets. Beyond just accuracy, we also show how interpretability can complement differential privacy by enabling users to uncover undesirable effects of noise, edit unwanted bias out of their models, and enforce desirable constraints like monotonicity with no additional privacy loss. These practical advantages might represent an important step forward for enabling the use of differentially private models in industries like healthcare, finance, and criminal justice. 

\begin{figure*}[b]
\centering
    \includegraphics[width=\textwidth]{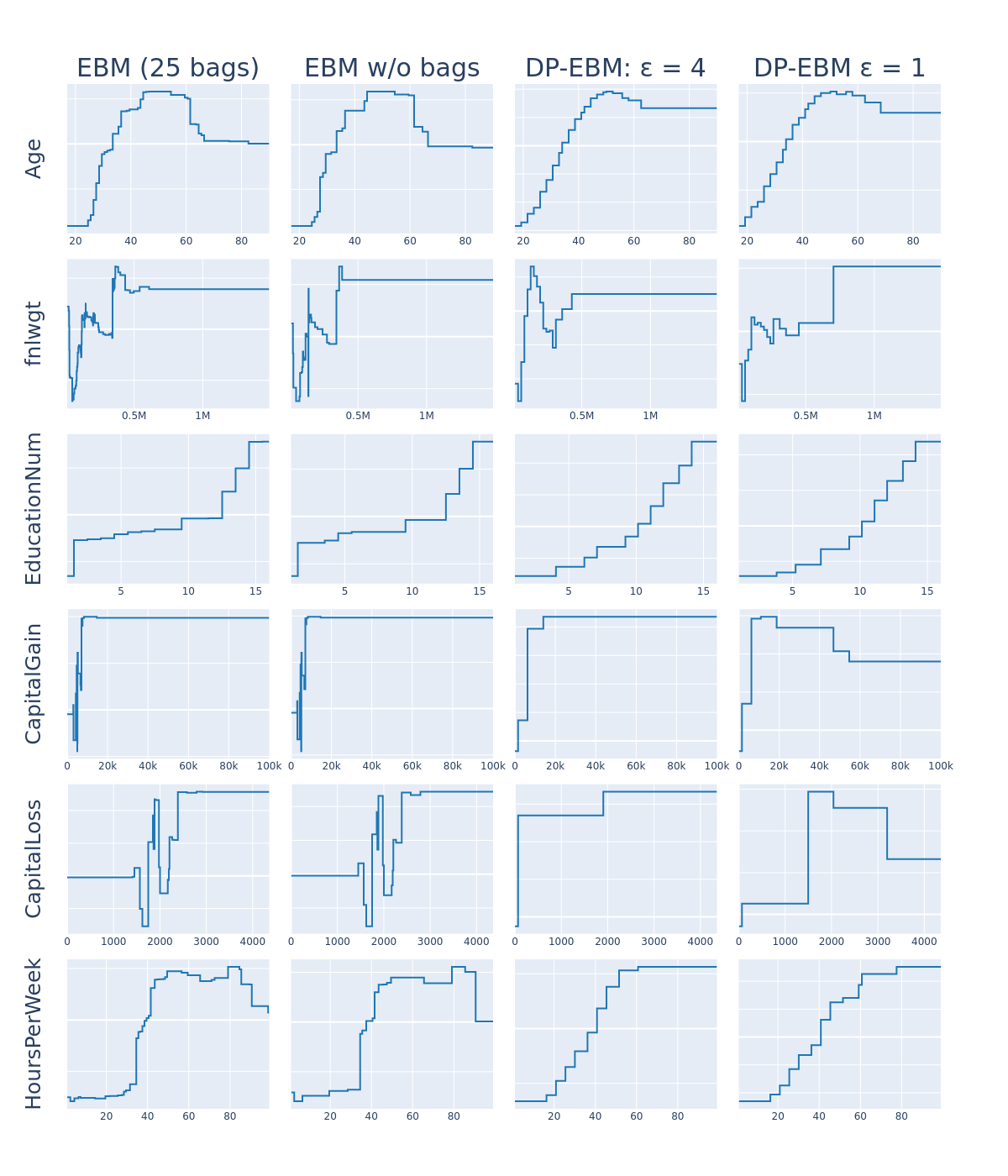}
  \caption{Shape function comparisons for all numeric features in the Adult Income dataset. We include the standard EBMs wrapped in 25 layers of bagging, EBMs without bagging, and two DP-EBMs at different privacy levels ($\eps=4$ and $\eps=1$). As expected, adding Gaussian DP noise acts as a strong regularizer --- the graphs on the right are smoother than those on the left.  In some cases this regularization is too strong, yet in other cases such as \textit{fnlwgt} the extra regularization might actually reduce model variance and improve intelligibility.}
  \label{fig:adult_all}
\end{figure*}

\clearpage

\section{Acknowledgement}
We would like to thank Paul Koch, Scott Lundberg, Samuel Jenkins, and Joshua Allen for their thoughtful discussions and copyediting.

\nocite{bu2020deep}
\nocite{plotly} 
\nocite{harris2020array} 
\nocite{scikit-learn, sklearn_api} 
\nocite{2020SciPy-NMeth}

\bibliography{references}

\begin{thebibliography}{40}
\providecommand{\natexlab}[1]{#1}
\providecommand{\url}[1]{\texttt{#1}}
\expandafter\ifx\csname urlstyle\endcsname\relax
  \providecommand{\doi}[1]{doi: #1}\else
  \providecommand{\doi}{doi: \begingroup \urlstyle{rm}\Url}\fi

\bibitem[Bagdasaryan \& Shmatikov(2019)Bagdasaryan and
  Shmatikov]{bagdasaryan2019differential}
Bagdasaryan, E. and Shmatikov, V.
\newblock Differential privacy has disparate impact on model accuracy.
\newblock \emph{arXiv preprint arXiv:1905.12101}, 2019.

\bibitem[Breiman et~al.(1984)Breiman, Friedman, Stone, and
  Olshen]{breiman1984classification}
Breiman, L., Friedman, J., Stone, C.~J., and Olshen, R.~A.
\newblock \emph{Classification and regression trees}.
\newblock CRC press, 1984.

\bibitem[Bu et~al.(2020)Bu, Dong, Long, and Su]{bu2020deep}
Bu, Z., Dong, J., Long, Q., and Su, W.~J.
\newblock Deep learning with gaussian differential privacy.
\newblock \emph{Harvard data science review}, 2020\penalty0 (23), 2020.

\bibitem[Buitinck et~al.(2013)Buitinck, Louppe, Blondel, Pedregosa, Mueller,
  Grisel, Niculae, Prettenhofer, Gramfort, Grobler, Layton, VanderPlas, Joly,
  Holt, and Varoquaux]{sklearn_api}
Buitinck, L., Louppe, G., Blondel, M., Pedregosa, F., Mueller, A., Grisel, O.,
  Niculae, V., Prettenhofer, P., Gramfort, A., Grobler, J., Layton, R.,
  VanderPlas, J., Joly, A., Holt, B., and Varoquaux, G.
\newblock {API} design for machine learning software: experiences from the
  scikit-learn project.
\newblock In \emph{ECML PKDD Workshop: Languages for Data Mining and Machine
  Learning}, pp.\  108--122, 2013.

\bibitem[Carlini et~al.(2018)Carlini, Liu, Kos, Erlingsson, and
  Song]{carlini2018secret}
Carlini, N., Liu, C., Kos, J., Erlingsson, {\'U}., and Song, D.
\newblock The secret sharer: Measuring unintended neural network memorization
  \& extracting secrets.
\newblock 2018.

\bibitem[Caruana et~al.(2015)Caruana, Lou, Gehrke, Koch, Sturm, and
  Elhadad]{caruana2015intelligible}
Caruana, R., Lou, Y., Gehrke, J., Koch, P., Sturm, M., and Elhadad, N.
\newblock Intelligible models for healthcare: Predicting pneumonia risk and
  hospital 30-day readmission.
\newblock In \emph{Proceedings of the 21th ACM SIGKDD international conference
  on knowledge discovery and data mining}, pp.\  1721--1730, 2015.

\bibitem[Chakravarti(1989)]{chakravarti1989isotonic}
Chakravarti, N.
\newblock Isotonic median regression: a linear programming approach.
\newblock \emph{Mathematics of operations research}, 14\penalty0 (2):\penalty0
  303--308, 1989.

\bibitem[Chang et~al.(2020)Chang, Tan, Lengerich, Goldenberg, and
  Caruana]{chang2020interpretable}
Chang, C.-H., Tan, S., Lengerich, B., Goldenberg, A., and Caruana, R.
\newblock How interpretable and trustworthy are gams?
\newblock \emph{arXiv preprint arXiv:2006.06466}, 2020.

\bibitem[Chaudhuri et~al.(2011)Chaudhuri, Monteleoni, and
  Sarwate]{chaudhuri2011differentially}
Chaudhuri, K., Monteleoni, C., and Sarwate, A.~D.
\newblock Differentially private empirical risk minimization.
\newblock \emph{Journal of Machine Learning Research}, 12\penalty0
  (Mar):\penalty0 1069--1109, 2011.

\bibitem[Chen \& Guestrin(2016)Chen and Guestrin]{chen2016xgboost}
Chen, T. and Guestrin, C.
\newblock Xgboost: A scalable tree boosting system.
\newblock In \emph{Proceedings of the 22nd acm sigkdd international conference
  on knowledge discovery and data mining}, pp.\  785--794, 2016.

\bibitem[Cummings et~al.(2019)Cummings, Gupta, Kimpara, and
  Morgenstern]{cummings2019compatibility}
Cummings, R., Gupta, V., Kimpara, D., and Morgenstern, J.
\newblock On the compatibility of privacy and fairness.
\newblock In \emph{Adjunct Publication of the 27th Conference on User Modeling,
  Adaptation and Personalization}, pp.\  309--315, 2019.

\bibitem[Dong et~al.(2019)Dong, Roth, and Su]{dong2019gaussian}
Dong, J., Roth, A., and Su, W.~J.
\newblock Gaussian differential privacy.
\newblock \emph{arXiv preprint arXiv:1905.02383}, 2019.

\bibitem[Dua \& Graff(2017)Dua and Graff]{Dua:2019}
Dua, D. and Graff, C.
\newblock {UCI} machine learning repository, 2017.
\newblock URL \url{http://archive.ics.uci.edu/ml}.

\bibitem[Dwork et~al.(2006)Dwork, McSherry, Nissim, and
  Smith]{dwork2006calibrating}
Dwork, C., McSherry, F., Nissim, K., and Smith, A.
\newblock Calibrating noise to sensitivity in private data analysis.
\newblock In \emph{Theory of cryptography conference}, pp.\  265--284.
  Springer, 2006.

\bibitem[Dwork et~al.(2014)Dwork, Roth, et~al.]{dwork2014algorithmic}
Dwork, C., Roth, A., et~al.
\newblock The algorithmic foundations of differential privacy.
\newblock \emph{Foundations and Trends{\textregistered} in Theoretical Computer
  Science}, 9\penalty0 (3--4):\penalty0 211--407, 2014.

\bibitem[Fletcher \& Islam(2015)Fletcher and Islam]{fletcher2015differentially}
Fletcher, S. and Islam, M.~Z.
\newblock A differentially private decision forest.
\newblock \emph{AusDM}, 15:\penalty0 99--108, 2015.

\bibitem[Fletcher \& Islam(2019)Fletcher and Islam]{fletcher2019decision}
Fletcher, S. and Islam, M.~Z.
\newblock Decision tree classification with differential privacy: A survey.
\newblock \emph{ACM Computing Surveys (CSUR)}, 52\penalty0 (4):\penalty0 1--33,
  2019.

\bibitem[Geurts et~al.(2006)Geurts, Ernst, and Wehenkel]{geurts2006extremely}
Geurts, P., Ernst, D., and Wehenkel, L.
\newblock Extremely randomized trees.
\newblock \emph{Machine learning}, 63\penalty0 (1):\penalty0 3--42, 2006.

\bibitem[Harris et~al.(2020)Harris, Millman, van~der Walt, Gommers, Virtanen,
  Cournapeau, Wieser, Taylor, Berg, Smith, Kern, Picus, Hoyer, van Kerkwijk,
  Brett, Haldane, del R{\'{i}}o, Wiebe, Peterson, G{\'{e}}rard-Marchant,
  Sheppard, Reddy, Weckesser, Abbasi, Gohlke, and Oliphant]{harris2020array}
Harris, C.~R., Millman, K.~J., van~der Walt, S.~J., Gommers, R., Virtanen, P.,
  Cournapeau, D., Wieser, E., Taylor, J., Berg, S., Smith, N.~J., Kern, R.,
  Picus, M., Hoyer, S., van Kerkwijk, M.~H., Brett, M., Haldane, A., del
  R{\'{i}}o, J.~F., Wiebe, M., Peterson, P., G{\'{e}}rard-Marchant, P.,
  Sheppard, K., Reddy, T., Weckesser, W., Abbasi, H., Gohlke, C., and Oliphant,
  T.~E.
\newblock Array programming with {NumPy}.
\newblock \emph{Nature}, 585\penalty0 (7825):\penalty0 357--362, September
  2020.
\newblock \doi{10.1038/s41586-020-2649-2}.
\newblock URL \url{https://doi.org/10.1038/s41586-020-2649-2}.

\bibitem[Hastie \& Tibshirani(1990)Hastie and
  Tibshirani]{hastie1990generalized}
Hastie, T.~J. and Tibshirani, R.~J.
\newblock \emph{Generalized additive models}, volume~43.
\newblock CRC press, 1990.

\bibitem[Holohan(2019)]{ibm2019diffprivlib}
Holohan, N.
\newblock Diffprivlib: The ibm differential privacy library.
\newblock \url{https://github.com/IBM/differential-privacy-library}, 2019.

\bibitem[Imtiaz \& Sarwate(2016)Imtiaz and Sarwate]{imtiaz2016symmetric}
Imtiaz, H. and Sarwate, A.~D.
\newblock Symmetric matrix perturbation for differentially-private principal
  component analysis.
\newblock In \emph{2016 IEEE International Conference on Acoustics, Speech and
  Signal Processing (ICASSP)}, pp.\  2339--2343. IEEE, 2016.

\bibitem[Inc.(2015)]{plotly}
Inc., P.~T.
\newblock Collaborative data science, 2015.
\newblock URL \url{https://plot.ly}.

\bibitem[Jagannathan et~al.(2009)Jagannathan, Pillaipakkamnatt, and
  Wright]{jagannathan2009practical}
Jagannathan, G., Pillaipakkamnatt, K., and Wright, R.~N.
\newblock A practical differentially private random decision tree classifier.
\newblock In \emph{2009 IEEE International Conference on Data Mining
  Workshops}, pp.\  114--121. IEEE, 2009.

\bibitem[Kairouz et~al.(2017)Kairouz, Oh, and
  Viswanath]{kairouz2017composition}
Kairouz, P., Oh, S., and Viswanath, P.
\newblock The composition theorem for differential privacy.
\newblock \emph{IEEE Transactions on Information Theory}, 63\penalty0
  (6):\penalty0 4037--4049, 2017.

\bibitem[Ke et~al.(2017)Ke, Meng, Finley, Wang, Chen, Ma, Ye, and
  Liu]{ke2017lightgbm}
Ke, G., Meng, Q., Finley, T., Wang, T., Chen, W., Ma, W., Ye, Q., and Liu,
  T.-Y.
\newblock Lightgbm: A highly efficient gradient boosting decision tree.
\newblock In \emph{Advances in neural information processing systems}, pp.\
  3146--3154, 2017.

\bibitem[Kusner et~al.(2015)Kusner, Gardner, Garnett, and
  Weinberger]{kusner2015differentially}
Kusner, M., Gardner, J., Garnett, R., and Weinberger, K.
\newblock Differentially private bayesian optimization.
\newblock In \emph{International conference on machine learning}, pp.\
  918--927. PMLR, 2015.

\bibitem[Li et~al.(2020)Li, Wu, Wen, and He]{li2020privacy}
Li, Q., Wu, Z., Wen, Z., and He, B.
\newblock Privacy-preserving gradient boosting decision trees.
\newblock In \emph{Proceedings of the AAAI Conference on Artificial
  Intelligence}, volume~34, pp.\  784--791, 2020.

\bibitem[Liu \& Talwar(2019)Liu and Talwar]{liu2019private}
Liu, J. and Talwar, K.
\newblock Private selection from private candidates.
\newblock In \emph{Proceedings of the 51st Annual ACM SIGACT Symposium on
  Theory of Computing}, pp.\  298--309, 2019.

\bibitem[Lou et~al.(2012)Lou, Caruana, and Gehrke]{lou2012intelligible}
Lou, Y., Caruana, R., and Gehrke, J.
\newblock Intelligible models for classification and regression.
\newblock In \emph{Proceedings of the 18th ACM SIGKDD international conference
  on Knowledge discovery and data mining}, pp.\  150--158, 2012.

\bibitem[Lundberg \& Lee(2017)Lundberg and Lee]{lundberg2017unified}
Lundberg, S.~M. and Lee, S.-I.
\newblock A unified approach to interpreting model predictions.
\newblock In \emph{Advances in neural information processing systems}, pp.\
  4765--4774, 2017.

\bibitem[Melis et~al.(2019)Melis, Song, De~Cristofaro, and
  Shmatikov]{melis2019exploiting}
Melis, L., Song, C., De~Cristofaro, E., and Shmatikov, V.
\newblock Exploiting unintended feature leakage in collaborative learning.
\newblock In \emph{2019 IEEE Symposium on Security and Privacy (SP)}, pp.\
  691--706. IEEE, 2019.

\bibitem[Nori et~al.(2019)Nori, Jenkins, Koch, and
  Caruana]{nori2019interpretml}
Nori, H., Jenkins, S., Koch, P., and Caruana, R.
\newblock Interpretml: A unified framework for machine learning
  interpretability.
\newblock \emph{arXiv preprint arXiv:1909.09223}, 2019.

\bibitem[Pedregosa et~al.(2011)Pedregosa, Varoquaux, Gramfort, Michel, Thirion,
  Grisel, Blondel, Prettenhofer, Weiss, Dubourg, Vanderplas, Passos,
  Cournapeau, Brucher, Perrot, and Duchesnay]{scikit-learn}
Pedregosa, F., Varoquaux, G., Gramfort, A., Michel, V., Thirion, B., Grisel,
  O., Blondel, M., Prettenhofer, P., Weiss, R., Dubourg, V., Vanderplas, J.,
  Passos, A., Cournapeau, D., Brucher, M., Perrot, M., and Duchesnay, E.
\newblock Scikit-learn: Machine learning in {P}ython.
\newblock \emph{Journal of Machine Learning Research}, 12:\penalty0 2825--2830,
  2011.

\bibitem[Rudin(2019)]{rudin2019stop}
Rudin, C.
\newblock Stop explaining black box machine learning models for high stakes
  decisions and use interpretable models instead.
\newblock \emph{Nature Machine Intelligence}, 1\penalty0 (5):\penalty0
  206--215, 2019.

\bibitem[Sheffet(2015)]{sheffet2015private}
Sheffet, O.
\newblock Private approximations of the 2nd-moment matrix using existing
  techniques in linear regression.
\newblock \emph{arXiv preprint arXiv:1507.00056}, 2015.

\bibitem[vanEeden(1958)]{vaneeden1958testing}
vanEeden, C.
\newblock Testing and estimating ordered parameters of probability
  distribution.
\newblock 1958.

\bibitem[Virtanen et~al.(2020)Virtanen, Gommers, Oliphant, Haberland, Reddy,
  Cournapeau, Burovski, Peterson, Weckesser, Bright, {van der Walt}, Brett,
  Wilson, Millman, Mayorov, Nelson, Jones, Kern, Larson, Carey, Polat, Feng,
  Moore, {VanderPlas}, Laxalde, Perktold, Cimrman, Henriksen, Quintero, Harris,
  Archibald, Ribeiro, Pedregosa, {van Mulbregt}, and {SciPy 1.0
  Contributors}]{2020SciPy-NMeth}
Virtanen, P., Gommers, R., Oliphant, T.~E., Haberland, M., Reddy, T.,
  Cournapeau, D., Burovski, E., Peterson, P., Weckesser, W., Bright, J., {van
  der Walt}, S.~J., Brett, M., Wilson, J., Millman, K.~J., Mayorov, N., Nelson,
  A. R.~J., Jones, E., Kern, R., Larson, E., Carey, C.~J., Polat, {\.I}., Feng,
  Y., Moore, E.~W., {VanderPlas}, J., Laxalde, D., Perktold, J., Cimrman, R.,
  Henriksen, I., Quintero, E.~A., Harris, C.~R., Archibald, A.~M., Ribeiro,
  A.~H., Pedregosa, F., {van Mulbregt}, P., and {SciPy 1.0 Contributors}.
\newblock {{SciPy} 1.0: Fundamental Algorithms for Scientific Computing in
  Python}.
\newblock \emph{Nature Methods}, 17:\penalty0 261--272, 2020.
\newblock \doi{10.1038/s41592-019-0686-2}.

\bibitem[Wang et~al.(2020)Wang, Han, Patel, Mohideen, and
  Rudin]{wang2020pursuit}
Wang, C., Han, B., Patel, B., Mohideen, F., and Rudin, C.
\newblock In pursuit of interpretable, fair and accurate machine learning for
  criminal recidivism prediction.
\newblock \emph{arXiv preprint arXiv:2005.04176}, 2020.

\bibitem[Wang et~al.()Wang, Dick, and Balcan]{wangscalable}
Wang, K.~W., Dick, T., and Balcan, M.-F.
\newblock Scalable and provably accurate algorithms for differentially private
  distributed decision tree learning.

\end{thebibliography}
\bibliographystyle{icml2021}

\clearpage
\section{Appendix}

\subsection{Differentially Private Binning}

While not a main contribution of this paper, for completeness (and reproducibility) we describe the differentially private binning algorithm used as a preprocessing step to DP-EBMs. Our goal is to create bins for each feature such that each bin contains roughly equal proportions of data. For example, if the goal is to end up with 10 bins per feature, we expect each bin to roughly contain $\frac{1}{10}$ of the data.

Like other DP tree implementations, we assume that the min and max of each feature are supplied by the user \cite{dwork2014algorithmic, jagannathan2009practical}. The algorithm begins by uniformly dividing the feature space into \emph{equal width} bins, without looking at any user data. If the user requests $m$ bins in total, the binning procedure creates $2\cdot m$ equal width bins to begin with. We then create a differentially private histogram based on those uniform bin widths, adding noise with sensitivity 1 \cite{dwork2006calibrating}. Theorems \ref{thm:GDPComp} and \ref{thmGDPtoDP} are also applied here to track cumulative privacy budget across all $K$ features and calibrate how much noise to add. Finally, to transform the noisy equal width bins into equal density bins, the algorithm greedily post-processes the released bin definitions by collapsing small bins into their neighbors until a sufficiently large "quantile" bin is achieved. While this method can be sub-optimal on highly skewed distributions, we find that it works well in practice on most datasets, and users have some control by choosing appropriate min/max values or applying transforms to features prior to training. The full algorithm is detailed below.

\begin{algorithm}[]
   \caption{Differentially Private Quantile Binning}
   \label{alg:dp-bin}
\begin{algorithmic}
   \STATE {\bfseries Input:} data $X$, target bins $m$, privacy parameters $\eps, \delta$
   \STATE {\bfseries Output:} Histogram $\hat{H}_k$ per feature
   \STATE Target datapoints per bin: $t = \frac{|X|}{m}$
   \FOR{feature $0...K$} 
    \STATE Equal width bins: $H_k = Hist(X[:, k], 2m)$
    \STATE Add noise to counts: $\hat{H}_k = H_k + \sigma \cdot N(0,1)$
    \STATE Postprocess:
    \FOR{each histogram bin {$b_i \in \hat{H}_k$}}
        \IF{$|b_i| < t$}
            \STATE Greedily collapse bins: $b_{i + 1} = b_{i+1} + b_i$
            \STATE Delete previous bin: $Delete(\hat{H}_k, b_i)$
        \ENDIF
    \ENDFOR
    \STATE Check if final bin is sufficiently large
    \IF {$|b_i| < t$}
        \STATE Collapse into previous bin: $b_{i - 1} = b_{i - 1} + b_i$
        \STATE Delete final bin: $Delete(\hat{H}_k, b_i)$
    \ENDIF
   \ENDFOR
\end{algorithmic}
\end{algorithm}

\subsection{Adaptations to other settings}

Algorithm \ref{alg:dp-ebm} in the main body of the paper focuses on  regression, which often can be adapted to other settings. It also is possible to use many alternative loss functions in DP-EBMs with no change to the privacy analysis. For example, to adapt DP-EBMs to binary classification, we might prefer residuals to be \emph{logits}. Our proof of privacy depends on ensuring that the sensitivity of the sum of residuals is bounded by at most $R$ at each iteration. In the regression setting, we show that the sum of residuals $T$ can be framed as:
\vspace*{0.1in}

\centerline{$T = \eta \cdot \left(\sum_{b \in S_{\ell}} \sum_{x_i \in \hat{H}_k (b)} y_i \right)  - Z\\ $}
where $Z$ is entirely public information released from previous iterations of the model. Therefore, simple transformations on Z do not affect the sensitivity of each update or the ultimate privacy guarantee of the algorithm. For binary classification, the only modification is to line 25:
\vspace*{0.1in} \textcolor{white}{.}
\centerline{$r^t_i = y_i + \frac{1}{1 + e^{\sum{f^t_k(\rho(H_k,x_i))}}} - 1$}

\subsection{Abnormally low AUROCs}

In our experiments, some models produce AUROCs substantially lower than 0.5. For example, on the credit fraud dataset for $\varepsilon=1$, DPBoost had an average test set AUROC of 0.438. This was a bit surprising, as predicting random labels should result in AUCs near 0.5. Further investigation suggests that the noise DP adds to models increases the variance of model predictions enough to make AUCs much larger and smaller than 0.5.

For example, Figure 5 shows the distribution in AUCs for the healthcare dataset when predictions are made completely randomly (left), and also shows the distribution for the same dataset when predictions are made with DP Logistic Regression with $\varepsilon=0.5$ (right).  Both distributions have mean 0.5, but the distribution is much wider for the model with $\varepsilon=0.5$.  Similar behavior is observed for all DP algorithms, including DP-EBMs with $\varepsilon < 0.1$.

\vspace*{0.1in}
\begin{figure}[H]
    \centering
    \includegraphics[width=230px]{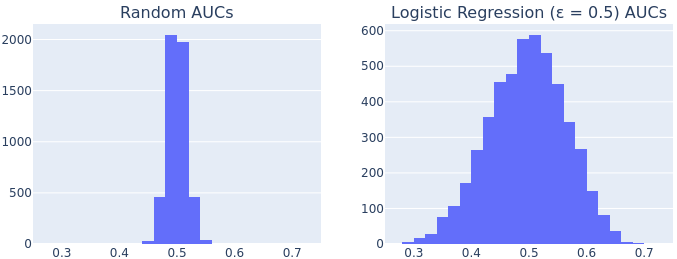}
    \caption{Distribution of AUROCs on 5000 train/test splits of healthcare dataset. Left: Randomly generated predictions.
    \\\hspace{\textwidth}
    Right: DP Logistic Regression ($\varepsilon=0.5$) test set AUROCs.}
    \label{fig:my_label}
\end{figure}

 \end{document}